\newtheorem{proposition}{Proposition}
\newtheorem{assumption}{Assumption}
\newtheorem{lemma}{Lemma}
\newtheorem{elemma}{Extra-Lemma}
\newtheorem{remark}{Remark}
\def\eqref#1{equation~\ref{#1}}
\def\1{\bm{1}}
\def\vtheta{{\bm{\theta}}}
\def\mC{{\bm{C}}}
\def\mI{{\bm{I}}}
\def\mM{{\bm{M}}}
\def\mR{{\bm{R}}}
\def\mTheta{{\bm{\Theta}}}
\DeclareMathAlphabet{\mathsfit}{\encodingdefault}{\sfdefault}{m}{sl}
\SetMathAlphabet{\mathsfit}{bold}{\encodingdefault}{\sfdefault}{bx}{n}
\newcommand{\Var}{\mathrm{Var}}
\newtheorem*{rep@theorem}{\rep@title}
\newcommand{\newreptheorem}[2]{%
\newenvironment{rep#1}[1]{%
 \def\rep@title{#2 \ref{##1}}%
 \begin{rep@theorem}}%
 {\end{rep@theorem}}}
\newtheorem{observation}{Observation}
\newtheorem{counterexample}{Counter Example}
\title{Gradient Variance Reveals Failure Modes in Flow-Based Generative Models}
\author{%
  Teodora Reu 
  \And 
  Sixtine Dromigny
  \And
  Michael Bronstein
  \And 
  Francisco Vargas\\
}
\begin{document}

\maketitle

\begin{abstract}
Rectified Flows learn ODE vector fields whose trajectories are straight between source and target distributions, enabling near one-step inference. We show that this straight-path objective conceals fundamental failure modes: under deterministic training, low gradient variance drives memorization of arbitrary training pairings, even when interpolant lines between pairs intersect. To analyze this mechanism, we study Gaussian-to-Gaussian transport and use the loss gradient variance across stochastic and deterministic regimes to characterize which vector fields optimization favors in each setting. We then show that, in a setting where all interpolating lines intersect, applying Rectified Flow yields the same specific pairings at inference as during training. More generally, we prove that a memorizing vector field exists even when training interpolants intersect, and that optimizing the straight-path objective converges to this ill-defined field. At inference, deterministic integration reproduces the exact training pairings. We validate our findings empirically on the CelebA dataset, confirming that deterministic interpolants induce memorization, while the injection of small noise restores generalization.

\end{abstract}

\section{Introduction}
The current state of the art in generative modeling involves learning dynamics between a known source distribution---such as a standardized Gaussian---and a target distribution from which many samples are available. When the learned dynamics are based on an ordinary differential equation (ODE), the model effectively learns a vector field connecting the two distributions~\citep{lipman2022flow, tong2023conditional, tong2023simulation}. When the learned dynamics are based on a stochastic differential equation (SDE), either the score function ~\citep{ho2020denoising, song2020score}, or both the score and the vector field \citep{albergo2023stochastic}, are learned, depending on the choice of SDE. Ultimately, an SDE can be rewritten as an ODE---referred to in the literature as the \textit{probability flow ODE}~\citep{song2020score}. 

These methods perform very well in practice across a wide range of data modalities: images \citep{Ho2022b, Balaji2022, Rombach2022}, video \citep{Ho2022a, Wang2024, Zhou2022}, audio \citep{Huang2023, Kong2020, Liu2023a, Ruan2023}, and molecular data \citep{hoogeboom2022equivariant, xu2022geodiff}. However, their main limitations are the high computational cost of the generation of samples (as integration over an ODE is required)  \citep{song2020score, tong2023conditional, albergo2023stochastic}, and their inability to learn optimal transport maps, which are often essential for unpaired data translation tasks\citep{shi2024diffusion}.

\begin{figure}[t]
    \centering
    \includegraphics[width=\linewidth]{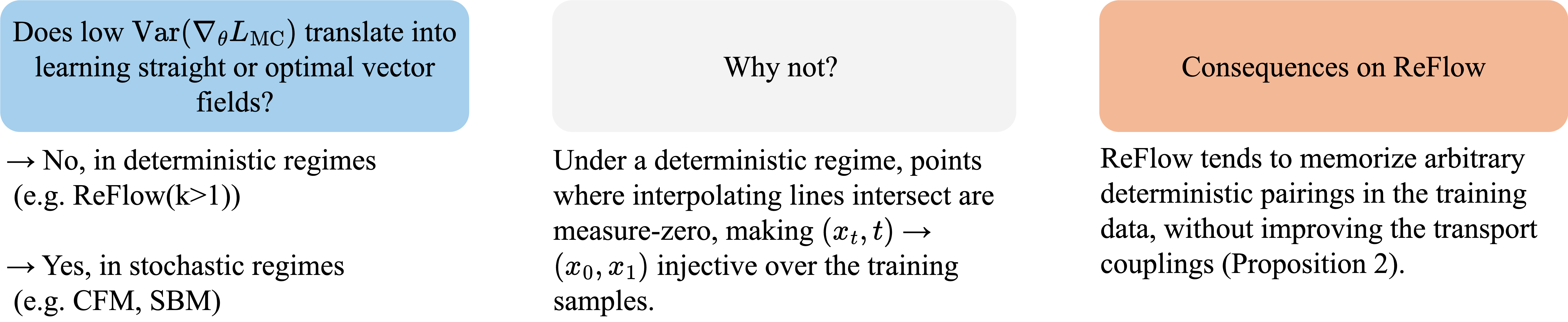}
    \caption{Intuition behind the main results.}
    \label{fig:enter-label}
\end{figure}

To address computational inefficiency, new models have been proposed. Consistency models accelerate sampling for score-based approaches \citep{song2023consistency, salimans2022progressive, kim2023consistency}, while Rectified Flows \citep{liu2022rectified, roy20242, lee2024improving} aim to learn straight vector fields that can be integrated in a single step. Rectified Flows iteratively update couplings and retrain vector fields to straighten transport paths, but repeated rectifications can accumulate errors, and it remains theoretically unclear whether one or two rectifications suffice under general conditions. Input-Convex Neural Networks (ICNN)-based parameterizations can, in theory, guarantee optimal couplings for noiseless interpolants, but are difficult to optimize in practice \citep{makkuva2020optimal, huang2020convex}.

Schrödinger Bridge Matching (SBM) \citep{shi2024diffusion, peluchetti2023diffusion,de2024schrodinger} extends these ideas by learning both forward and backward vector fields using noisy interpolants, approximating entropic optimal transport. This bidirectional approach can improve stability and mitigate error accumulation, but requires training two networks per iteration.

In this paper, we address fundamental limitations of iterative generative models by analyzing how gradient variance reveals suboptimality in learned vector fields. Our theoretical and empirical study uncovers why standard neural architectures struggle to represent even simple transports and how repeated rectifications can lead to memorization rather than improvement.

\paragraph{Contributions}  
In Section \ref{sec:gaussian}, we investigate the question "When is gradient variance informative?" in the Gaussian-to-Gaussian setting. We show that the answer depends on the training regime (stochastic or deterministic), and that in deterministic regimes, ill-defined vector fields that memorize the training pairs may emerge. Figure \ref{fig:180_rot} provides a central illustration of this phenomenon. Additionally, we derive bounds for the loss and gradient variance under both regimes and demonstrate that our empirical observations align with the theoretical predictions from Proposition \ref{prp:lin_gauss}.  

In Section \ref{sec:rect}, we extend these results from the Gaussian-to-Gaussian case to general finite training datasets within the ReFlow paradigm. We prove in Proposition \ref{prop:det_rect} that the vector field minimizing the loss and gradient corresponds to the exact pairings seen in the data exists. Moreover, due to the deterministic nature of the integration method, the model can reproduce these exact pairings at inference time by effectively “jumping over” intersections (Remark \ref{rem:same_couplings}). We validate our theoretical findings empirically on both a Mixture of Gaussians and the CelebA dataset in Section \ref{subsec:mixture}, and Section \ref{subsec:celeba}.


\section{Background}
\paragraph{Notation.}
Let $\mathbb{R}^d$ denote $d$-dimensional Euclidean space. We use $\mathcal{P}(\mathbb{R}^d)$ for the set of probability distributions on $\mathbb{R}^d$. The source and target distributions are $\pi_0$ and $\pi_1$, with $X_0 \sim \pi_0$ and $X_1 \sim \pi_1$. A transport map is $T: \mathbb{R}^d \to \mathbb{R}^d$, and $T_\# \pi_0$ denotes the pushforward of $\pi_0$ by $T$. Interpolants are written $X_t = I(X_0, X_1, t)$, with $t \in [0,1]$. We write $\mathbb{E}[\cdot]$ for expectation, $\operatorname{Var}[\cdot]$ for variance, and $\nabla_\theta$ for gradients with respect to parameters $\theta$. Bold symbols (e.g., $\mathbf{x}$) denote vectors. All other notation is defined in context. For $k \in \mathbb{R}$, $R_{k^\circ}$ denotes a $d \times d$ rotation matrix corresponding to a counterclockwise rotation by $k$ degrees in the plane of interest. All other notation is defined in context. 

\paragraph{Optimal Transport and Entropy-Regularized OT}
Let $\pi_0, \pi_1$ be probability measures on $\mathbb{R}^d$. The Monge problem ~\citep{monge1781memoire} seeks a transport map $T$ minimizing:
\begin{equation}
    \inf_T \int \|T(x) - x\|^2 \, d\pi_0(x) \quad \text{s.t.} \quad T_\sharp\pi_0 = \pi_1,
\end{equation}
where $T_\sharp\pi_0$ denotes the pushforward of $\pi_0$ under $T$. The Kantorovich~\citep{kantorovitch1958translocation} relaxation introduces couplings $\pi \in \Pi(\pi_0, \pi_1)$ and solves:
\begin{equation}
    \mathcal{W}_2^2(\pi_0,\pi_1) = \inf_{\pi \in \Pi(\pi_0, \pi_1)} \mathbb{E}_{(X_0,X_1)\sim\pi}[\|X_1 - X_0\|^2],
\end{equation}
where $\Pi(\pi_0, \pi_1)$ denotes the set of joint distributions with marginals $\pi_0$ and $\pi_1$. Entropy-regularized Optimal Transport (eOT) adds a Kullback--Leibler divergence penalty $\mathcal{W}_2^\epsilon(\pi_0,\pi_1) = \inf_{\pi \in \Pi(\pi_0, \pi_1)} \mathbb{E}_\pi[\|X_1 - X_0\|^2] + \epsilon D_{\mathrm{KL}}(\pi \| \pi_0 \otimes \pi_1).$

 When $\pi_0$ is absolutely continuous, the Monge and Kantorovich problems admit the same deterministic optimal plan. A dynamic formulation describes OT as evolving a path $\{X_t\}_{t \in [0,1]}$ connecting $X_0 \sim \rho_0$ and $X_1 \sim \rho_1$. For convex costs, the optimal path is given by the straight-line interpolant $X_t = (1 - t) X_0 + t X_1$~\citep{mccann1997convexity}.

\paragraph{Conditional Flow Matching}
\label{ssec:cfm}
Given  the interpolants $X_t = I(X_0, X_1, t)= \alpha_t X_0 + \beta_t X_1 + \gamma_t \epsilon$ where $\epsilon \sim \mathcal{N}(0,I)$, the CFM objective \cite{lipman2022flow} is:
\begin{equation}
\label{eq:loss}
    \mathcal{L}_{CFM}(v) = \mathbb{E}_{t\sim\mathcal{U}(0,1), (X_0,X_1)\sim\pi, \epsilon}[\|(X_1 - X_0) - v(X_t,t) \|^2]
\end{equation}
The learned vector field $v$ generates flows via the ODE:
\begin{equation}
    \frac{d}{dt}X_t = v(X_t,t)
\end{equation}

\paragraph{Rectified Flows (or ReFlow)} \citet{liu2022flow} propose an iterative procedure to straighten transport paths. At each iteration $k$, a vector field $v^{(k)}$ is trained using the CFM loss $\mathcal{L}_{CFM}$ on the current coupling $(X_0^{(k)}, X_1^{(k)})$. The updated coupling is then generated via $X_1^{(k+1)} = \text{ODE-Solve}[v^{(k)}](X_0^{(k)}, t=1)$, and the process repeats until convergence. A coupling $(X_0^{(k)}, X_1^{(k)})$ is considered straight if
\begin{equation}
\label{eq:straight}
    \mathbb{E}\left[X_0^{(k)} - X_1^{(k)} \mid  X_t^{(k)}=x \right] = X_0^{(k)} - X_1^{(k)}.
\end{equation}
where $X_t^{(k)} = (1-t)X_0^{(k)} +  t X_1^{(k)}$. A coupling is considered straight if, for deterministic couplings $(X_0, X_1)$, the mapping $(X_t, t) \mapsto (X_0, X_1)$ is injective. Although some works claim one or two rectifications suffice to obtain straight couplings \citep{lee2024improving} (through one iteration) \citep{roy20242} (through two iterations, though the paper got retracted later on), counterexamples (see Counter-Example~\ref{ce:1}) shows this is not guaranteed under basic assumptions, after one rectification.


\paragraph{Gradient Variance} We begin by rewriting the CFM objective from Equation~\ref{eq:loss}:
\begin{align}
    L(v, T, I) \;=\; \mathbb{E}_{X_0 \sim \pi_0}\!\left[\left\|T(X_0) - X_0 \;-\; v(X_t, t)\right\|^2\right], \quad
    X_t = I(X_0, T(X_0), t),
\end{align}
where $T$ is a (possibly random or deterministic) map satisfying $T_\# \pi_0 = \pi_1$. A Monte Carlo approximation of this loss, using samples $\{X_0^{(s)}\}_{s=1}^S$, is given by:
\begin{align}
    L_\mathrm{MC}(v, T, I) = \frac{1}{S} \sum_{s=1}^S \left\| T\left(X_0^{(s)}\right) - X_0^{(s)} - v\left(X_t^{(s)}, t\right) \right\|^2, \quad
    X_t^{(s)} =I(X_0^{(s)}, T(X_0^{(s)}), t)).
\end{align}

The gradient variance will have the following formulation:
\begin{align}
    Var[\nabla_\theta L_\mathrm{MC}(v, T,I)] = Var\left[ \nabla_\theta \frac{1}{S} \sum_{s=1}^S \left\| T\left(X_0^{(s)}\right) - X_0^{(s)} - v\left(X_t^{(s)}, t\right) \right\|^2\right].
\end{align}

\paragraph{Memorization and Generalization} Memorization and Generalization have been extensively explored in the field of generative modeling \citep{bamberger2025carr, buchanan2025edge, somepalli2023diffusion, ren2024unveiling, 10943761, wang2024replication, stein2023exposing} with privacy concerns \citep{ghalebikesabi2020differentially} and copyright infringement \citep{cui2023diffusionshield} being key issues that would result from such frameworks. Our setting differs from most prior memorization studies by assuming that each training target has a deterministic counterpart in the source, as in Rectified Flows, rather than sampling source–target pairs independently as is standard in the CFM literature. As a consequence, the appropriate notion of memorization changes: instead of asking whether a model starting from arbitrary source points reproduces pairs from the training dataset, we consider a deterministic training set of pairs \((X_0, X_1)\) and say that memorization occurs if the ODE integration of the learned vector field from a training source \(X_0\) returns its paired target \(\hat X_1 = X_1\) at inference, effectively reproducing the training coupling. This reframing highlights the central question in rectified-flow training: what value is added by learning a new vector field if deterministic integration simply recovers the original pairings, i.e., amounts to memorization rather than improved transport structure or generalization.

\begin{figure}[t]
    \centering
    \includegraphics[width=\linewidth]{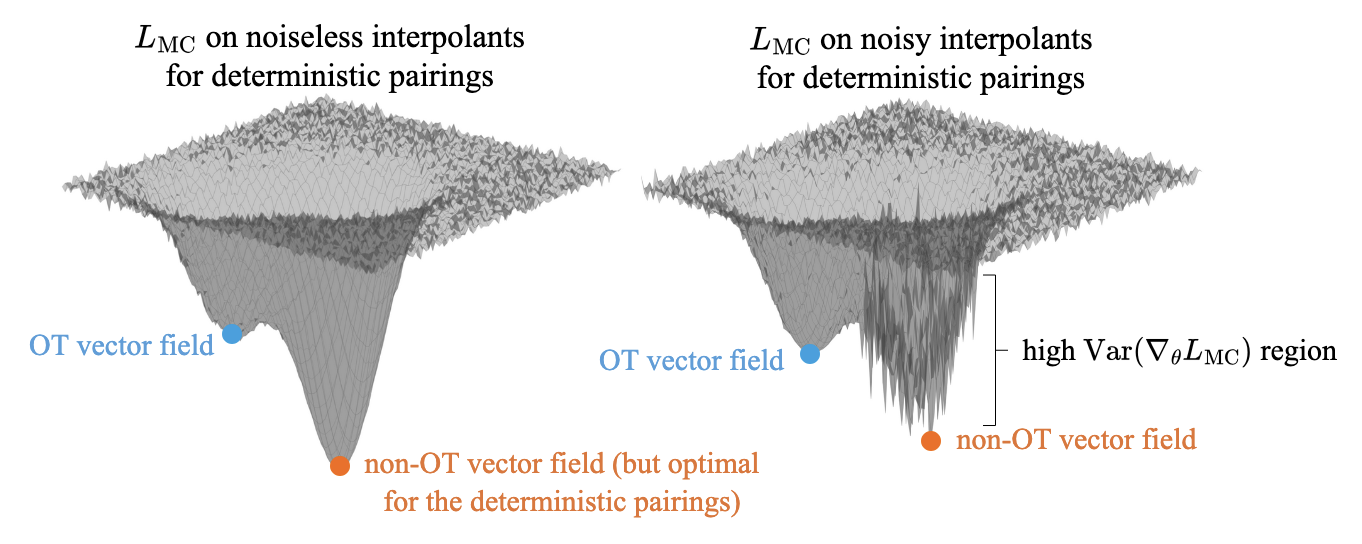}
    \caption{Schematic representing a hypothetical loss ($L_\text{MC}$) landscape. The gradient variance of the loss acts as an indicator of solution quality. The schematic illustrates how the variance of the loss gradient reveals information about the optimality of the vector field under different interpolant types.}
    \label{fig:noisy_loss}
\end{figure}

\begin{wrapfigure}{r}{0.4\textwidth}
  \centering
  \includegraphics[width=0.39\textwidth]{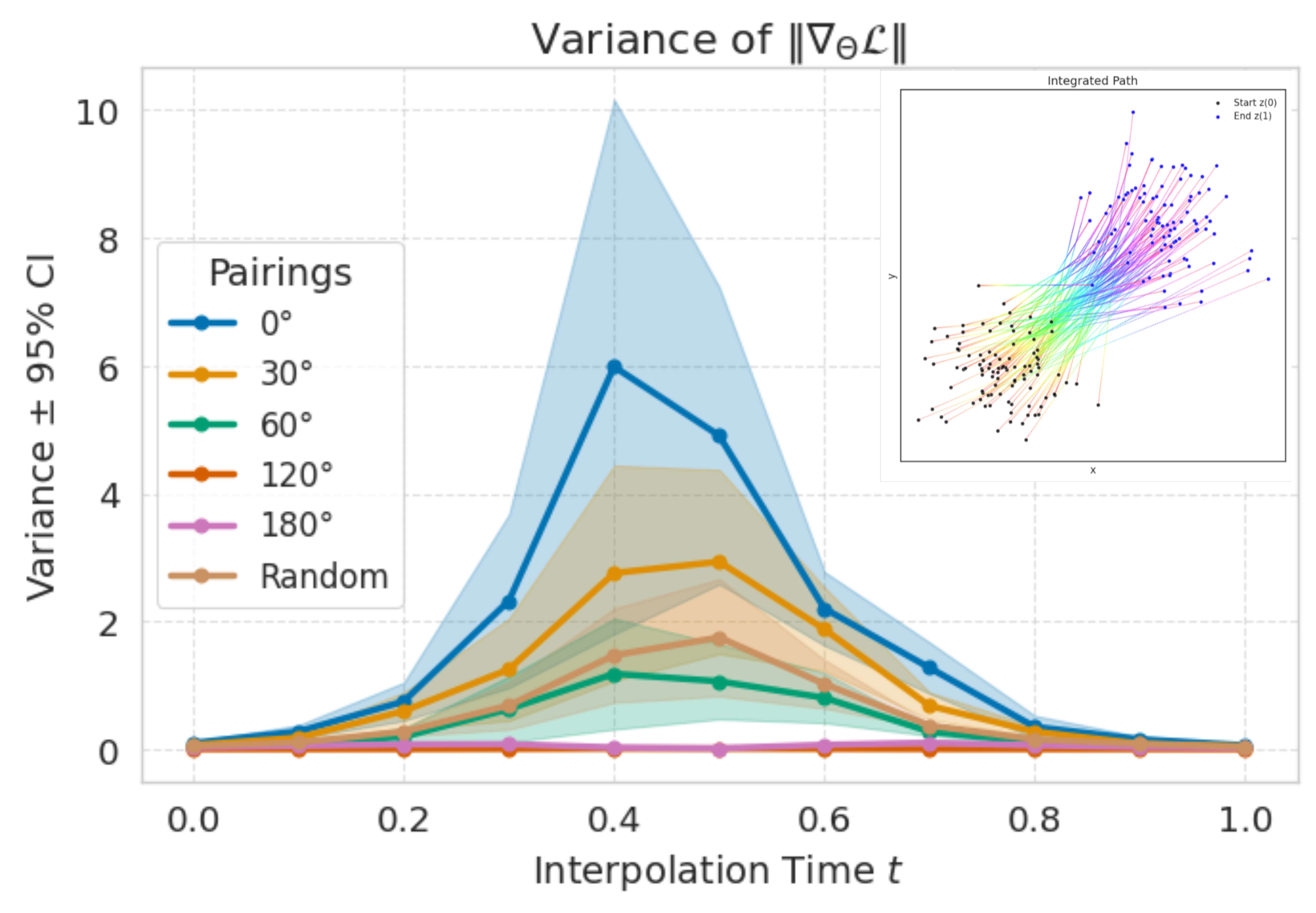}
  \caption{Gradient variance over time for a vector field rotating by $120^\circ$, under various pairing types including rotated and random couplings. In the top right, the two Gaussians are shown along with sample trajectories, color-coded by integration time. Optimal pairings exhibit the highest variance, while random pairings do not yield the maximum variance.}
  \label{fig:rotated_2d_gaussian}
\end{wrapfigure}

\section{Gradient Variance of Gaussian-to-Gaussian transport}
\label{sec:gaussian}

Analyzing  $Var[\nabla_\theta L_{\mathrm{MC}}(v, T,I)]$ across different \textit{choices of pairings} $T$ (e.g.\ random vs.\ optimal vs.\ straight), \textit{interpolants} $I$ (noiseless vs.\ stochastic), and \textit{vector-field classes} $v_\theta$ provides insight into which solutions are favored by the loss landscape.  For clarity, we distinguish vector fields that are optimal in the OT sense from those that are optimal for a specific pairing (minimizing error given that pairing); we refer to the latter as pair-optimal. Although a loss may have multiple minima, optimization often prefers those with lower gradient variance, even if they are suboptimal in terms of transport cost. Figure~\ref{fig:noisy_loss} illustrates this effect: for deterministic couplings and noiseless interpolants, the lowest variance and loss minimizer can be non-OT, while introducing stochasticity can shift preference toward more optimal solutions.

Because Gaussian-to-Gaussian OT is analytically tractable (Lemma \ref{lemma:vec_param}), the exact gradient variance can be derived for a broad set of vector-field classes, interpolants, and pairing schemes in Proposition \ref{prp:lin_gauss}. To broaden the comparison, a rotational OT (rOT) field is included that first rotates the source and then maps it to the target; these trajectories are straight for all angles except the degenerate $180^o$ case, which is not well-defined. This controlled setting enables closed-form analysis of variance across configurations and direct alignment with empirical measurements. 

In Lemma \ref{lemma:vec_param}, we derive this closed-form expression for the optimal vector field between two Gaussian distributions with different means and covariances, and we show that a multilayer perceptron (MLP) cannot approximate it without error,  motivating the use of the closed-form parameterization rather than an MLP surrogate. 

\begin{lemma}\label{lemma:vec_param}  
Let \( X_0 \sim \mathcal{N} (0, \mI_d) \) and \( X_1 \sim \mathcal{N} (\mu, \mM_d) \), where \( \mM_d \) is a positive definite and symmetric matrix. The OT vector field is given by  
\[
\hat v_{OT}(X_t, t, \hat \mTheta, \hat \vtheta) =  \hat\vtheta + \hat \mTheta[\mI_d + t \hat \mTheta]^{-1} (X_t - t \hat \vtheta),
\]
 and the rotating vector field: 
\[
\hat v_{rOT}(X_t, t, \hat \mTheta_{rOT}^\mR, \hat \vtheta ) =  \hat\vtheta + \hat \mTheta_{rOT}^\mR [\mI_d + t \hat \mTheta_{rOT}^\mR]^{-1} (X_t - t \hat \vtheta),
\]
where \( X_t = (1 - t) X_0 + t X_1 \), \( \hat \mTheta^\mR = \mM_d^{1/2} - \mI_d \), \( \hat \mTheta_{rOT} = \mM_d^{1/2}  \mR - \mI_d \), \( \hat\vtheta = \mu \), and $\mR$ is a rotation matrix with $\mR \neq -\mR$. Furthermore, the function \( \hat v_{OT} \) cannot be exactly represented by an MLP, CNN, Transformer architecture when given concatenated inputs \([X_t, t]\).  
\end{lemma}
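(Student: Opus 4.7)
The plan has two parts: first, derive the closed-form vector fields from Brenier's theorem together with the deterministic nature of the OT coupling; second, rule out exact representability by the listed architectures using the rational-in-$t$ structure that they cannot produce.

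For the closed form, I start from Brenier's theorem applied to Gaussians, which gives the OT map $T(X_0) = \vmu + \mM_d^{1/2} X_0$. Substituting into the straight-line interpolant yields $X_t = t\vmu + (\mI_d + t\hat{\mTheta}) X_0$ with $\hat{\mTheta} = \mM_d^{1/2} - \mI_d$, so the linear relation can be inverted as $X_0 = (\mI_d + t\hat{\mTheta})^{-1}(X_t - t\vmu)$. Since the OT coupling is deterministic, $\mathbb{E}[X_1 - X_0 \mid X_t]$ collapses to $\vmu + \hat{\mTheta} X_0$, and combining the two displays gives the stated formula. For the rotational version, the observation is that any rotation $\mR$ preserves $\mathcal{N}(0,\mI_d)$, so $T_{rOT}(X_0) = \vmu + \mM_d^{1/2}\mR X_0$ is still a valid (but generally non-OT) transport map; repeating the same algebra with $\hat{\mTheta}^{\mR}_{rOT} = \mM_d^{1/2}\mR - \mI_d$ reproduces the second formula. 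The condition $\mR \neq -\mR$ (the degenerate $180^\circ$ case) is what keeps $\mI_d + t\hat{\mTheta}^{\mR}_{rOT}$ invertible on $[0,1]$.

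For non-representability, the key structural observation is that $\hat v_{OT}(X_t,t)$ is \emph{affine} in $X_t$ with coefficients that are \emph{rational} in $t$: by Cramer's rule the entries of $\hat{\mTheta}(\mI_d + t\hat{\mTheta})^{-1}$ are ratios of polynomials whose common denominator is $\det(\mI_d + t\hat{\mTheta})$, a degree-$d$ polynomial whose roots $-1/\lambda_i$ (for nonzero eigenvalues $\lambda_i$ of $\hat{\mTheta}$) are genuine poles whenever $\mM_d \neq \mI_d$. I would then argue architecture by architecture. For an MLP or CNN with piecewise polynomial activations (ReLU and its variants) on the concatenated input $[X_t,t]$, the output is, on each linear region of input space, a polynomial of bounded degree in $(X_t,t)$, hence cannot equal a non-polynomial rational function on any open subset. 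For transformers, the outputs are analytic compositions of polynomials with softmax and layer-norm, which are meromorphic with singularities only at the zero-sets of softmax denominators; these cannot be tuned to match the pole locations prescribed by the spectrum of $\hat{\mTheta}$ across admissible $\mM_d$.

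The easy direction is the closed form: it is essentially Brenier plus deterministic conditioning, and the algebra is one line for each formula. The main obstacle is making the non-representability statement rigorous across architectures. The cleanest route I see is to fix $X_t = 0$ and compare the univariate functions $t \mapsto \hat v_{OT}(0,t) = \vmu - t\,\hat{\mTheta}(\mI_d + t\hat{\mTheta})^{-1}\vmu$ (rational with simple poles at $-1/\lambda_i$) against the univariate restriction of the network's output along the $t$-axis: for piecewise polynomial activations the existence of a pole is already incompatible with the piecewise polynomial structure on any linear region containing $-1/\lambda_i$, while for smooth activations an analyticity plus pole-counting argument yields the contradiction. A complementary structural argument would note that matrix inversion $(\mI_d + t\hat{\mTheta})^{-1}$ is not in the primitive operation set of the listed architectures, so it cannot be reproduced exactly by any finite composition of affine maps and elementwise nonlinearities.
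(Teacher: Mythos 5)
Your closed-form derivation is essentially the paper's: both start from the Brenier map $T(X_0)=\mu+\mM_d^{1/2}X_0$, substitute into the straight-line interpolant, invert the resulting affine map to recover $X_0$ from $(X_t,t)$, and subtract to get the drift; the rotated version repeats the algebra with $\mM_d^{1/2}\mR$, using rotational invariance of the standard Gaussian, and the exclusion of $\mR=-\mR$ is exactly the invertibility issue you flag.

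Where you diverge is the non-representability claim, and this is a genuine difference in approach. The paper reduces to showing that no finite MLP can exactly represent $q(x)=1/x$ (their Extra-Lemma~\ref{lemma:0_error}) and argues via the scale-equivariance $q(cx)=c^{-1}q(x)$, asserting that the activation functions ``would need to satisfy $\sigma(cx)=c^{-1}\sigma(x)$.'' That step has a logical gap: the network as a whole, not each activation, is what would need the scaling property, and a composition of non-equivariant pieces can in principle produce an equivariant function. Your argument instead fixes the spatial input, restricts to the $t$-axis, identifies the target as a rational function of $t$ via Cramer's rule, and then rules out exact representation architecture by architecture. For ReLU/piecewise-linear networks this is airtight---a piecewise affine function cannot coincide with a non-affine rational function on any open interval regardless of whether the poles lie inside the domain---and it is strictly more rigorous than the paper's. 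However, your sketch for smooth activations and transformers relies on ``analyticity plus pole-counting,'' which does not close the gap when the poles $-1/\lambda_i$ sit outside $[0,1]$: the rational target is real-analytic on the domain of interest, and analyticity alone does not distinguish it from a transformer's output. You would need a sharper invariant there (transcendence degree, a differential-algebraic relation, or an argument about the specific singularity structure under analytic continuation). The paper's proof is equally informal on that front, so you have not missed anything it supplies---but be aware that neither your write-up nor the paper's gives a fully rigorous proof for the smooth-activation case.
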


 An intuitive reason why this vector field cannot be represented with zero error by typical neural network parameterizations is the difficulty in capturing the matrix inverse term \( [\mI + t \hat\mTheta]^{-1} \). For experiments quantifying the training impact of this approximation error, see App.~\ref{app:powers_t} for full results.

\begin{proposition}[Informal]\label{prp:lin_gauss} Let $\pi_0 \sim \mathcal N (0, \mI_d)$, and  $\pi_1 \sim \mathcal N (\mu , \mM_d)$, where $\mM_d$ is a positive definite and symmetric matrix. Let the following push forward maps: $ T_{OT}(X_0) = \mM_d^{1/2} X_0 + \mu  $,  $ T_{rOT}^\mR(X_0) = \mM_d^{1/2} \mR X_0  + \mu $, and $ T_{rand}(X_0) = X_1$, where $R$ is a rotation matrix. Let the following parameterisation be $v(X_t,t)=\vtheta+\mTheta[I_d+ t \mTheta ]^{-1}(X_t-t\vtheta)$, where $X_t = (1-t)X_0 + tX_1$. Let:

\begin{align}
 L_{\text{MC}}(\mTheta, \vtheta, T, v) = \frac{1}{N} \sum_{s=1}^N \| T(X_0^{(s)}) - X_0^{(s)} - v(X_t^{(s)},t, \mTheta, \vtheta) \|^2.
\end{align}

\((\hat{\Theta},\hat{\theta})\) with \(T_{\mathrm{OT}}\) and \((\hat{\Theta}_{\mathrm{rOT}},\hat{\theta})\) with \(T_{\mathrm{rOT}}^{R}\) achieve zero loss and zero gradient variance. If the rotation angle is not \(180^{\circ}\), trajectories remain straight, but any learned field that does not correspond to the true angle exhibits nonzero gradient variance. In addition, for \(T_{\mathrm{rOT}}\) and \(T_{\mathrm{rand}}\) evaluated with the \(v_{\mathrm{OT}}\) class, gradient variance does not increase merely because multiple straight interpolants pass near one another (for analytical values check Appendix \ref{app:gauss}.).
\end{proposition}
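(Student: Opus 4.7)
The overarching strategy is to exploit Lemma~\ref{lemma:vec_param}: along every affine interpolant the parameterisation $v(\cdot;\mTheta,\vtheta)$ collapses to the true velocity, so residuals and their gradients can be read off directly. For any deterministic pairing of the form $T(X_0) = (\mI_d + \mA)X_0 + \mu$, the interpolant $X_t = (\mI_d + t\mA)X_0 + t\mu$ plugged into $v(X_t,t;\mA,\mu)$ yields $\mA X_0 + \mu = T(X_0)-X_0$ pointwise, because the factor $(\mI_d+t\mA)^{-1}$ exactly cancels the $(\mI_d+t\mA)$ appearing in $X_t$. Applied with $\mA=\hat\mTheta$ under $T_{OT}$ and with $\mA=\hat\mTheta_{rOT}^\mR$ under $T_{rOT}^\mR$, each per-sample residual is identically zero, so both $L_{MC}$ and $\nabla_{(\mTheta,\vtheta)}L_{MC}$ vanish sample-by-sample and their empirical variances are zero. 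Straightness of the trajectories is immediate from the definition of $X_t$ as a convex combination of $X_0$ and $T_{rOT}^\mR(X_0)$.

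For the wrong-angle claim I would fix a true rotation $\mR_\alpha$ and evaluate the loss with parameters $(\hat\mTheta_{rOT}^{\mR_\beta},\mu)$ where $\mR_\beta\neq \mR_\alpha$. Writing $\mB_\gamma = \mM_d^{1/2}\mR_\gamma - \mI_d$ and substituting $X_t = (\mI_d + t\mB_\alpha)X_0 + t\mu$ into $v$ produces a per-sample residual $r_t(X_0) = [\mB_\alpha - \mB_\beta(\mI_d + t\mB_\beta)^{-1}(\mI_d + t\mB_\alpha)]X_0$. At $t=0$ this reduces to $\mM_d^{1/2}(\mR_\alpha-\mR_\beta)X_0$, a non-zero linear map of the Gaussian $X_0$ whenever $\mR_\alpha\neq\mR_\beta$, and by continuity the residual remains non-zero on an open interval of $t$. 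The gradient of the per-sample loss is then a non-trivial quadratic polynomial in $X_0\sim\mathcal N(0,\mI_d)$, so its variance is strictly positive, giving the claimed nonzero gradient variance for the wrong angle.

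The final clause, that combining $v_{OT}$ with either $T_{rOT}^\mR$ or $T_{rand}$ does not inflate the variance merely because many straight interpolants pass near one another, is handled by writing the residual as an explicit affine function of the jointly Gaussian pair $(X_0,X_1)$ (with correlated marginals in the rOT case and independent marginals in the random case), and evaluating the second moments of the resulting gradient via Isserlis' theorem. The resulting expressions, collected in Appendix~\ref{app:gauss}, depend only on the joint covariance of the pairing, not on the fine-grained geometry of how individual interpolant lines cross in space, which is exactly the ``crossings do not inflate variance'' assertion. The principal technical hurdle is keeping $(\mI_d+t\hat\mTheta)^{-1}$ tractable inside these Gaussian integrals and putting the two pairings on a common algebraic footing: simultaneously diagonalising $\mM_d^{1/2}$ and, in the rOT case, choosing $\mR$ to act as block $2$D rotations in the eigenbasis of $\mM_d$ reduces everything to scalar integrals in each eigen-coordinate, after which the comparison becomes transparent by direct inspection of the dependence on the joint covariance alone.
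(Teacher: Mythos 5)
Your first two steps match the paper's argument closely. The pointwise cancellation for matched parameters (residual $T(X_0)-X_0-v(X_t,t;\hat\mTheta,\hat\vtheta)\equiv 0$ because $(\mI_d+t\mA)^{-1}$ exactly undoes the $(\mI_d+t\mA)$ factor inside $X_t$) is the same observation that drives the paper's ``CASE 1 (OT)'' computation and its analogue for $T_{rOT}$ with $v_{rOT}$; both yield zero loss and zero gradient variance sample-by-sample. Likewise, your wrong-angle residual $r_t(X_0)=[\mB_\alpha-\mB_\beta(\mI_d+t\mB_\beta)^{-1}(\mI_d+t\mB_\alpha)]X_0$ is a nontrivial linear map of a nondegenerate Gaussian whenever $\mR_\alpha\neq\mR_\beta$, and the resulting quadratic-form gradient has strictly positive variance — the paper records exactly this via the closed forms $\frac{4}{N}\bm 1^\top\mM_d^{1/2}(2\mI-\mR^\top-\mR)\mM_d^{1/2\top}\bm 1$ and $\frac{8}{N}\mathrm{tr}\bigl((\mM_d^{1/2}(\mR-\mI))^2\bigr)$. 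So far this is the same route.

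The gap is in your third step. You argue that because the gradient variance ``depends only on the joint covariance of the pairing, not on the fine-grained geometry of how individual interpolant lines cross,'' crossings cannot inflate variance. This is not actually an argument: the crossing geometry of the interpolant bundle is itself a deterministic function of the joint covariance of $(X_0,X_1)$ (e.g. whether all lines meet at $t=1/2$ is a covariance-level statement once the pairing is linear Gaussian). So ``depends only on covariance'' does not separate variance from crossing structure. What the claim actually requires — and what the paper supplies — is the explicit $t$-dependence of the gradient variance, followed by an inspection showing it has no peak at the crossing time. Concretely, the paper takes $\mM_d^{1/2}=2\mI$, $\mR=-\mI$ (the $180^\circ$ case where all interpolants cross at $t=\tfrac12$), and computes $A^\top A=(\tfrac{16}{1-t})^2\mI$ for the $\vtheta$-gradient and $\frac{8d}{N}\frac{(16(3t-1))^2}{(1+t)^8}$ for the $\mTheta$-gradient, both of which are monotone or have extrema away from $t=\tfrac12$; similarly for $T_{rand}$ the variance $4d\frac{t^2+1}{(1+t)^4}$ is monotone decreasing. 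If you push your Isserlis computation through to the $t$-dependent formula and then check its shape, you recover the paper's conclusion; but the covariance-only framing on its own does not close the argument and would not convince a referee without that last concrete step.
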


 \begin{figure}[b]
    \centering
    \includegraphics[width=\linewidth]{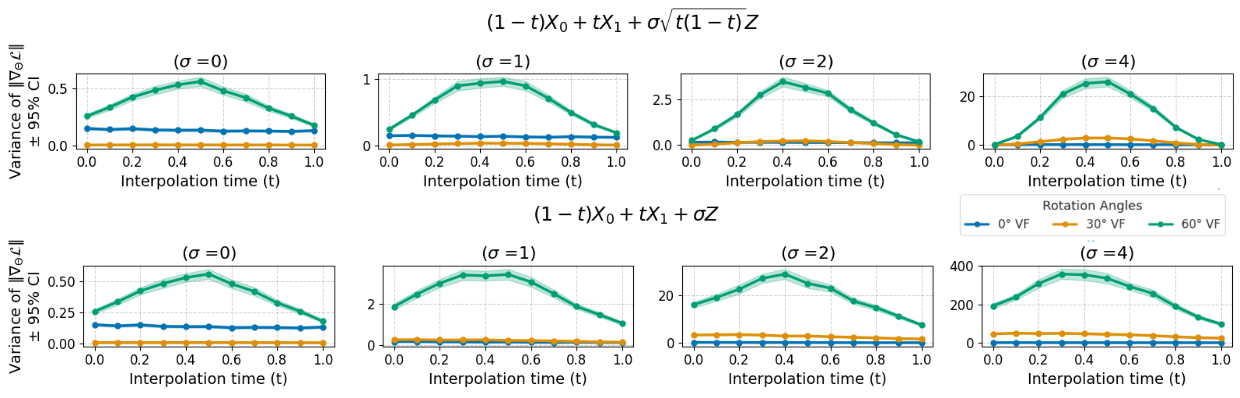}
    \caption{Gradient variance (with 95\% confidence intervals) for Gaussian transport paired $\left(X_0, R_{30^\circ} X_0 + \mu\right)$ under different noise levels ($\sigma$ of the two interpolants mentioned in the title of the figures). We compare vector fields inducing $0^\circ$ (OT, \textcolor{blue}{blue}), $30^\circ$ (pair-optimal, \textcolor{orange}{orange}), and $60^\circ$ (non-OT, non-pair-optimal, \textcolor{green}{green}) rotations. Shaded regions show confidence intervals calculated over 100 bootstrap samples. As noise increases ($\sigma=0\to4$), the optimal transport (OT) field exhibits significantly reduced variance ($p<0.01$, paired t-test) while maintaining lower transport error. }
    \label{fig:gradient_variance_noise}
\end{figure}

\begin{figure}[t]
    \centering
    \includegraphics[width=\textwidth]{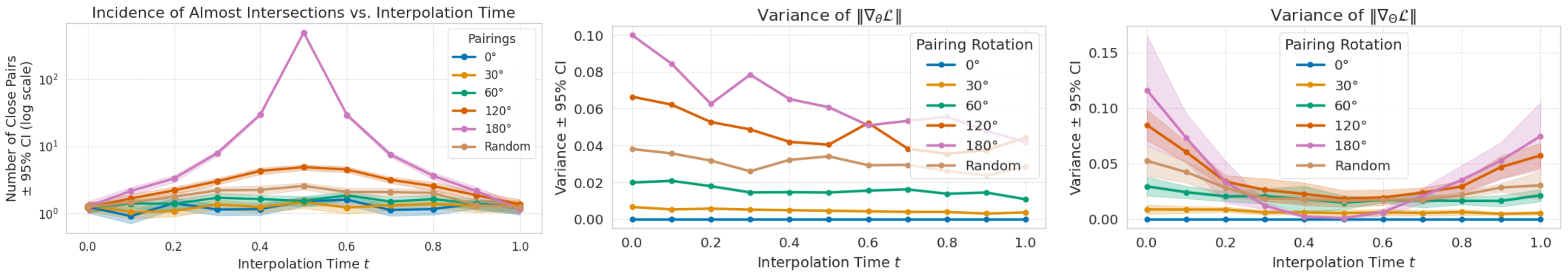}
\caption{
    Gradient variance over time for several pairing types: $\left(X_0, \mM_d^{1/2} \mR_{k^\circ} X_0 + \mu\right)$ with $k \in \{0, 30, 60, 120, 180\}$, and $(X_0, X_1)$ with $X_1 \sim \mathcal{N}(\mu, \mM_d)$, over the  OT vector field. Shaded regions indicate 95\% confidence intervals computed via 100 bootstrap samples. Notably, variance does not increase in regions where interpolant trajectories come closer together: for example, with $180^\circ$ rotation, all interpolants intersect at $t=1/2$, yet variance is lowest there. Random pairings exhibit lower variance under the OT field than structured pairings with $120^\circ$ (straight) or $180^\circ$ (not-straight) rotations, highlighting that variance is not simply a function of interpolant density.}
    
    \label{fig:2d_gaussian}
\end{figure}


\begin{figure}[b]
  \centering
\includegraphics[width = \textwidth]{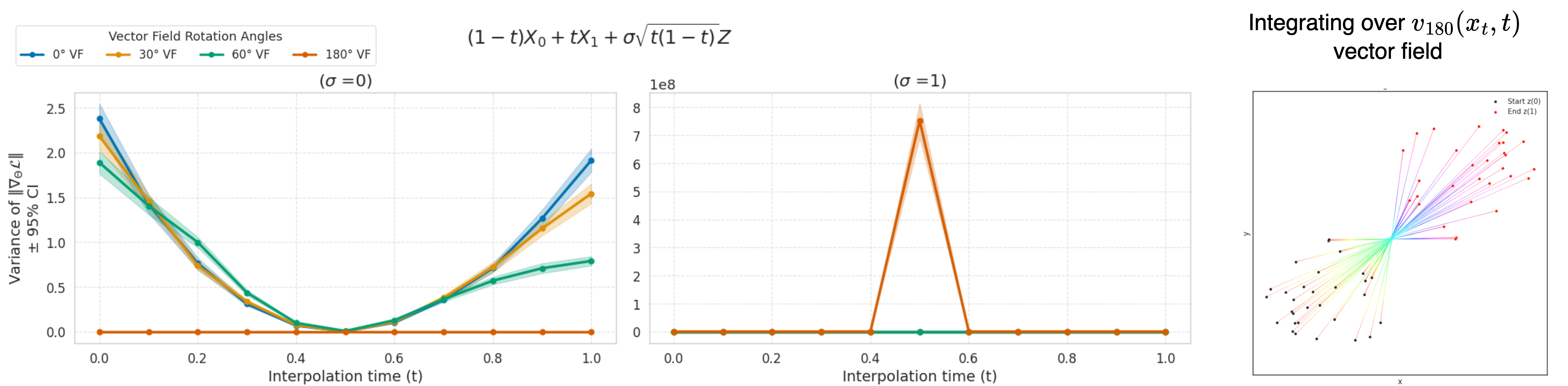}
\caption{ Integration over a vector field performing a $180^\circ$ rotation. Although all interpolation paths intersect at $t = 1/2$, and $\mathbb{E}[X_1 - X_0 \mid X_t = x_{1/2}] \neq X_1 - X_0$, we still recover $180^\circ$-rotated couplings. This is because the numerical integration procedure discretizes time and does not evaluate the vector field precisely at the point of intersection, effectively bypassing it.}
\label{fig:180_rot}
\end{figure}

A common misconception in flow matching is that gradient variance primarily originates at points \((x_t,t)\) where straight-line interpolants intersect \citep{gagneux2025avisualdive, mathieu2024flow}; Proposition~\ref{prp:lin_gauss} challenges this by showing that variance can be nonzero even for straight couplings with no intersections, while intersections or regions of high interpolant density do not, by themselves, induce elevated variance. Consequently, gradient variance under \(L_{\mathrm{MC}}\) is not governed by geometric proximity of trajectories but by mismatch between the learned vector field and the pairing structure (e.g., a misaligned rotation). This also clarifies that low gradient variance does not certify transport optimality: when \(v\) and \(T\) are aligned (e.g., they apply the same rotation), both loss and gradients vanish, whereas under the OT field, even straight pairings can exhibit nonzero gradient variance (see Figures~\ref{fig:gradient_variance_noise}, \ref{fig:rotated_2d_gaussian}, \ref{fig:180_rot}).

\subsection{Empirical validation}
This subsection studies empirically what happens with the gradient variance across choices of parings, interpolants, and vector field classes.

\textbf{Key experiment}
Figure~\ref{fig:180_rot} illustrates the key experiment that motivates Section~\ref{sec:rect}. We consider a deterministic pairing with \(X_0 \sim \mathcal{N}(0, I_2)\) and
\(
X_1 = R_{180^\circ}\,X_0 \;+\; [5,\,5]^{\top},
\)
so that all straight-line interpolants meet at the midpoint when \(t=\tfrac{1}{2}\). Under noiseless interpolants, the learned vector field memorizes this ill-defined pairing and, at inference, reproduces the same mapping that rotates the source Gaussian by \(180^\circ\) and translates it by \([5,\,5]^{\top}\). This occurs because the probability of sampling exactly \(t=\tfrac{1}{2}\) during training is zero, so the vector field is never constrained at the intersection point; at inference, deterministic numerical integration effectively bypasses this singular location, yielding the same pairing. This provides a concrete failure case in which rectification does not resolve interpolant intersections but instead preserves the training-specific pairing.

\textbf{Under noiseless interpolant} We now characterize the variance of the gradients over the noiseless interpolant $x_t = (1-t) x_0 + tx_1$, as this is the commonly used type of interpolant in ReFlow architectures. Notably,  when the vector field is OT, straight pairings can exhibit higher gradient variance than random pairings (see Figure~\ref{fig:2d_gaussian}). Conversely, with a non-OT vector field, the loss can display substantial variance for pairings that are themselves OT (see Figure~\ref{fig:rotated_2d_gaussian}). Furthermore, for straight non-OT pairings, the most stable solution is the corresponding pair-optimal (non-OT) vector field, Figure \ref{fig:180_rot}. Variance does not arise from intersections or regions where the interpolating lines come close together, and random pairings exhibit lower variance than structured couplings, such as a $120^\circ$ rotation (see Figure~\ref{fig:2d_gaussian}).

\textbf{Variance under stochastic interpolants}
We now consider a noisy interpolant
\(
x_t \;=\; (1-t)\,x_0 + t\,x_1 \;+\; f(t, \sigma)\,Z, 
\qquad Z \sim \mathcal{N}(0,I_d),
\)
where \(f:[0,1]\to\mathbb{R}_{\ge 0}\) modulates the noise level over time (e.g., \(f(t, \sigma)=\sigma\sqrt{t(1-t)}\)).  As shown in Figure \ref{fig:gradient_variance_noise}, and Figure \ref{fig:180_rot}, adding noise to the interpolant causes the optimal vector field to exhibit lower gradient variance than the pairing-optimal vector field. Furthermore, in the second frame of Figure \ref{fig:180_rot}  we can now see that at $t=1/2$ the variance of the gradients is very high, meaning that, even though we never sample $t=1/2$, the loss landscape has very high variance around that area for this vector field solution.

\begin{wrapfigure}{r}{0.5\textwidth}
  \centering
  \includegraphics[width=0.5\textwidth]{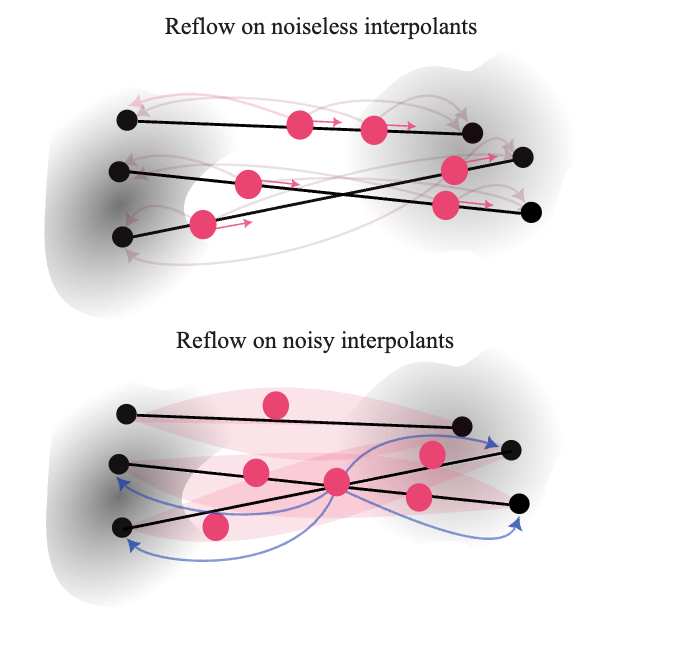}
\caption{ Schematic showing the two types of interpolants. When training with deterministic pairings, sampling at an intersection has zero probability mass. As a result, the mapping $(Z_t, t) \mapsto (Z_0, T(Z_0), t)$ becomes injective, making it straightforward to define a vector field at these points by $v(Z_t, t) = Z_0 - T(Z_0)$.  In contrast, for noisy interpolants, this injectivity no longer holds.}

  \label{fig:interpolants_injectivity}
\end{wrapfigure}

\newpage

\section{Rectified Flow Dynamics: Minimizer Memorizes}
\label{sec:rect}

This section generalizes the experiment presented in Figure \ref{fig:180_rot} within the framework of Proposition \ref{prop:det_rect}, which studies stagnation under a finite training dataset in the deterministic setting, following are the implications of these findings. We provide empirical validation on both synthetic and CelebA datasets in Sections \ref{subsec:mixture}, and \ref{subsec:celeba}.

\subsection{Theoretical Results}
We now formalize two key properties of ReFlow: first, its tendency to stagnate once it reaches a straight coupling (Lemma \ref{lem:straight_rect}), a known result previously established in \cite{roy20242, liu2022rectified}; and second, its ability to memorize arbitrary deterministic pairings when trained on a finite dataset (Proposition \ref{prop:det_rect}), which represents the main novel contribution of this work.

\begin{lemma}[Idempotence of Rectified Flows]
\label{lem:straight_rect}
Let $\pi_0, \pi_1$ be distributions on $\mathbb{R}^D$ with densities, and $(Z_0,Z_1)$ their straight-line coupling via linear interpolant $Z_t = (1-t)Z_0 + tZ_1$. Subsequent Rectified Flow iterations with this noise-free interpolant yield identical couplings:
\[
\texttt{ReFlow}^{(k)}(Z_0,Z_1) = (Z_0,Z_1) \quad \forall k \geq 1.
\]
\end{lemma}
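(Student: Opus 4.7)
The plan is to exploit two facts: (i) the $L^{2}$ minimizer of the CFM loss is a conditional expectation, and (ii) the straightness hypothesis (\ref{eq:straight}) forces this minimizer to be constant along each linear interpolant. Concretely, since
\[
\mathcal{L}_{\mathrm{CFM}}(v) = \mathbb{E}\!\left[\|v(Z_t,t) - (Z_1-Z_0)\|^{2}\right]
\]
is a squared-error regression objective, standard Hilbert-space projection gives that its $L^{2}$ minimizer over measurable fields is
\[
v^{\star}(x,t) \;=\; \mathbb{E}\!\left[\,Z_1 - Z_0 \,\middle|\, Z_t = x\,\right].
\]

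Next, I would apply the straightness assumption. By (\ref{eq:straight}) we have $v^{\star}(Z_t,t) = Z_1 - Z_0$ almost surely under the coupling. Hence the straight curve $t \mapsto Z_t = (1-t)Z_0 + tZ_1$ satisfies $\dot Z_t = Z_1 - Z_0 = v^{\star}(Z_t,t)$, i.e.\ it is an integral curve of the learned ODE issued from $Z_0$. Integrating $\dot X_t = v^{\star}(X_t,t)$ with $X_0 = Z_0$ to $t=1$ therefore returns $X_1 = Z_1$, so the push-forward coupling produced by a single ReFlow step coincides with the input $(Z_0,Z_1)$. An induction on $k$ closes the argument: the output coupling equals the input and so remains straight, and the same reasoning applies verbatim at each subsequent iteration, giving $\texttt{ReFlow}^{(k)}(Z_0,Z_1) = (Z_0,Z_1)$ for all $k \geq 1$.

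The main obstacle is the measure-theoretic subtlety at points where several interpolant lines meet: $v^{\star}$ is defined only almost everywhere there, and one must still argue that the ODE flow is well-posed along $\pi$-almost every trajectory. Because $\pi_0, \pi_1$ admit densities and (\ref{eq:straight}) forces the ambiguous crossing set to carry zero mass under the law of $(Z_t,t)$, classical existence-uniqueness results for regular Lagrangian flows (in the spirit of DiPerna--Lions) suffice to make the step $(Z_0, Z_1) \mapsto (Z_0, X_1)$ rigorous. This is precisely the point at which the deterministic counterexample from Section~\ref{sec:gaussian} (the $180^{\circ}$ rotation) fails to contradict the lemma: straightness is violated at the common midpoint, so the hypothesis of Lemma~\ref{lem:straight_rect} does not apply there.
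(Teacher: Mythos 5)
Your proposal is correct and follows essentially the same route as the paper's proof: both identify the CFM loss minimizer as the conditional expectation $v^\star(x,t)=\mathbb{E}[Z_1-Z_0\mid Z_t=x]$, invoke the straightness condition (\ref{eq:straight}) to conclude $v^\star(Z_t,t)=Z_1-Z_0$ almost surely, and observe that the straight line $t\mapsto Z_t$ is then an integral curve of the learned field, so ODE integration from $Z_0$ returns $Z_1$. The only differences are cosmetic additions on your side: you make the induction on $k$ explicit, you flag the measure-theoretic well-posedness issue (which the paper's phrase ``the unique solution'' glosses over) and sketch a DiPerna--Lions-style resolution, and you explain why the $180^\circ$ rotation does not contradict the lemma because it violates the straightness hypothesis -- all useful commentary, but the core mechanism is identical to the paper's.
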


This stagnation stems from the bijective relationship between interpolants $(Z_t,t)$ and initial pairs $(Z_0,Z_1)$, given by our assumption of the straightness of the deterministic couplings. 

Some recent works have suggested that $1$-Reflow might be sufficient to recover straight pairings~\cite{lee2024improving}, and a now-retracted claim~\cite{roy20242} proposed that $2$-Reflow is enough. However, no formal proof has been established to date. Under mild assumptions (see Appendix~\ref{app:ass}), we provide a simple counter example demonstrating that $1$-Reflow is not sufficient to guarantee straight paths.

\begin{counterexample}[1-Reflow May Fail Under Mild Assumptions]
\label{ce:1}
Even if the learned transport map $T(x_0)$ is injective (e.g., under standard Lipschitz and growth conditions; see Appendix~\ref{app:ass}), the straight-line interpolant $I(x_0, T(x_0), t)$ need not be. For instance, a rotation-based map like $T(x_0) = R_{180^\circ} x_0 + 5$ (which can be realized by a continuous vector field) leads to overlapping interpolants, breaking injectivity of $(x_t, t) \mapsto x_0$. As a result, a new ReFlow step cannot reconstruct $(x_0, x_1)$ and fails to straighten the coupling. See Appendix~\ref{app:ce} for details and a visual example.
\end{counterexample}

Relaxing the straightness assumption to consider general deterministic couplings and finite training datasets leads to a nuanced but equally critical limitation:

\begin{proposition}[The Minimizer Memorizes]
\label{prop:det_rect}
Let $\pi_0$ and $\pi_1$ be two probability densities on $\mathbb{R}^D$, and let $T:\mathbb{R}^D\ \to \mathbb{R}^D$ be a deterministic transport map pushing $\pi_0$ to $\pi_1$ (so if $Z_0\sim\pi_0$, then $Z_1=T(Z_0)\sim\pi_1$).  Suppose we draw a finite dataset of \(N\) i.i.d.\ samples $ \{ Z_0^{(i)},\,Z_1^{(i)}=T(Z_0^{(i)})\}_{i=1}^N $,
and for each \(i\) we also sample \(m\) time‐points
\(\{t^{(i,j)}\}_{j=1}^m\subset[0,1]\) (e.g.\ uniformly).  Let
$Z_t^{(i,j)} \;=\; (1-t^{(i,j)})\,Z_0^{(i)} \;+\; t^{(i,j)}\,Z_1^{(i)}.$
Define the empirical loss over this doubly indexed dataset by
\[
L_{\mathrm{MC}}^{\mathrm{det}}(v_\Theta)
\;=\;\frac{1}{N\,m}
\sum_{i=1}^N \sum_{j=1}^m
\Bigl\|
\bigl(Z_1^{(i)} - Z_0^{(i)}\bigr)
\;-\;
v_\Theta\bigl(Z_t^{(i,j)},\,t^{(i,j)}\bigr)
\Bigr\|^2.
\]
Then there exists a (deterministic) vector field \(v\) attaining zero loss:
$
L_{\mathrm{MC}}^{\mathrm{det}}(v) \;=\; 0.
$
\end{proposition}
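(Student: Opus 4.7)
}
The plan is to construct a deterministic vector field $v$ that exactly realizes, at every training input $(Z_t^{(i,j)}, t^{(i,j)})$, the displacement $Z_1^{(i)} - Z_0^{(i)}$, and to argue that these $Nm$ pointwise constraints are almost surely consistent. Since $L_{\mathrm{MC}}^{\mathrm{det}}$ is a finite sum of squared norms, attaining $L=0$ is equivalent to requiring $v(Z_t^{(i,j)}, t^{(i,j)}) = Z_1^{(i)} - Z_0^{(i)}$ for every $(i,j)$, and the only obstruction to simultaneously satisfying all $Nm$ such requirements is a \emph{collision}: a pair $(i,j)\neq(i',j')$ with identical inputs but different required outputs.

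First I would rule out collisions almost surely. Because the $t^{(i,j)}$'s are drawn i.i.d.\ from a continuous distribution on $[0,1]$, no two of the $Nm$ time coordinates coincide with probability one; this alone forces the inputs $(Z_t^{(i,j)}, t^{(i,j)})$ to be pairwise distinct in $\mathbb{R}^D\times[0,1]$. Even conditioning on coincident times, a spatial collision $(1-t)(Z_0^{(i)}-Z_0^{(i')})+t(Z_1^{(i)}-Z_1^{(i')})=0$ is a measure-zero event under the absolute continuity of $\pi_0$ (it is a codimension-$D$ condition on the joint density of the sampled $Z_0$'s). So with probability one the $Nm$ inputs are isolated points in $\mathbb{R}^D\times[0,1]$.

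Given distinctness, the construction is immediate. I would set $v(x,t)=Z_1^{(i)}-Z_0^{(i)}$ at each of the $Nm$ isolated points and let $v\equiv 0$ elsewhere; each summand of $L_{\mathrm{MC}}^{\mathrm{det}}(v)$ then vanishes. If smoothness is desired, one can replace the indicator-type assignment by a mollification: choose disjoint open balls around each input point (possible by isolation) and glue together bump functions equal to the prescribed vector at the centre, obtaining a $C^\infty$ field that still attains $L=0$.

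The main obstacle is not technical but interpretive. The argument itself is essentially elementary; its weight lies in what it says about rectified-flow training. The global minimum of $L_{\mathrm{MC}}^{\mathrm{det}}$ is realized by vector fields that merely record the training displacements, with no geometric relation to any OT structure in $T$. Combined with the gradient-variance analysis of Section~\ref{sec:gaussian} and the numerical-integration argument illustrated in Figure~\ref{fig:180_rot}, this is what makes memorization generic under deterministic interpolants: once the training pairs are fixed, the loss landscape contains a zero-loss minimizer that, upon deterministic integration, reproduces exactly those pairs at inference, regardless of whether the induced straight-line interpolants intersect.
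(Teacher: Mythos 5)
Your proof is correct and takes essentially the same route as the paper: define a lookup vector field that records each training displacement at its sampled interpolant $(Z_t^{(i,j)}, t^{(i,j)})$, and argue that collisions---distinct indices mapping to the same $(z,t)$ with conflicting required outputs---occur with probability zero. Your observation that the $Nm$ sampled times are almost surely pairwise distinct, so that the inputs are automatically distinct regardless of whether the interpolant lines intersect in space, is a small but genuine simplification over the paper's argument, which instead reasons about the measure-zero set of time values at which the interpolant lines cross.
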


Next, we discuss the implications of Lemma~\ref{lem:straight_rect} and Proposition~\ref{prop:det_rect}.

\begin{remark}[Condition for Recovering the Same Couplings]
\label{rem:same_couplings}
Defining a look-up vector field over the training data—or approximating it via the Universal Approximation Theorem (UAT)—does not guarantee that integration will recover the original pairings. To ensure this, the integration process must traverse the specific time steps $\{t^{(i,j)}\}_{j=1}^m$ associated with each training pair $X_{(i)}$. These are the only points along the interpolation $x_t = (1 - t) x_0 + t x_1$ where we are guaranteed that $v(x_t, t) = x_1 - x_0$. While this condition might appear restrictive, the proposition remains valid for any natural number $m$.
\end{remark}

This insight arose from empirical observations. For example, when training a neural network on pairs $(X_0, T(X_0))$ with $T(X_0) = R_{180^\circ} X_0 + \mu$, we found that—despite all interpolants intersecting at $t = 1/2$—the model successfully learned the rotation and preserved the pairings during integration. See Figure~\ref{fig:180_rot} for a visualization.

\begin{remark}[Noise Breaks Proof Assumptions]
\label{rem:noise_advantage}
A key advantage of using noisy interpolants is that they break the bijection between $(x_t, t)$ and $(x_0, T(x_0))$, thereby violating the key steps of the proofs of  Lemma~\ref{lem:straight_rect} and Proposition~\ref{prop:det_rect}. This disruption prevents the model from becoming stuck in suboptimal or deterministic straight pairings. As shown in Figure~\ref{fig:gradient_variance_noise}, this can lead to learning vector fields that are closer to the optimal solution. 
\end{remark}

\textit{Why Doesn’t CFM Memorize Random Pairings?} Let $x_1$ finite dataset. For each epoch we sample $x_0$ independently from continuous distribution. Initially, we hypothesized that CFM could memorize these random couplings, since Proposition~\ref{prp:cross} (Appendix~\ref{app:reflow}) shows that interpolants intersect at $x_t$ with probability zero. However, CFM avoids memorization for a similar reason with the one mentioned in the Remark \ref{rem:noise_advantage}, the independent sampling of $x_0$ from continuous distributions breaks the bijection between $(x_t, t)$ and $(x_0, x_1)$. 

While Rectified Flows can help straighten trajectories, they risk collapsing onto memorized pairings at the first iteration, when deterministic couplings are formed. This memorization undermines generalization and highlights a limitation of relying solely on early deterministic transport.

\begin{table}[b]
\centering
\caption{Comparison of CFM and CFM with stochastic interpolants (CFM$(\sigma = 0.05)$) across low and high dimensions. CFM$(\sigma = 0.05)$ consistently generates samples that better match the target distribution (as indicated by lower MMD and Sinkhorn distances), without resorting to memorization, as evidenced by its strong performance on both generated and true data metrics. This highlights CFM$(\sigma = 0.05)$’s superior generalization and sample quality compared to CFM. Experiments done over 10 seeds, the mean is presented, for full table check Appendix \ref{ass:synth_dets}.}
\label{tab:cfm_sbm_comparison}
\begin{adjustbox}{max width=\textwidth}
\begin{tabular}{ll
                S[table-format=2.4]
                S[table-format=2.4]
                S[table-format=2.4]
                S[table-format=2.4]
                S[table-format=2.4]
                S[table-format=2.4]
                S[table-format=2.4]
                S[table-format=2.4]}
\toprule
\multicolumn{2}{l}{\textbf{Dimension}} & \multicolumn{4}{c}{\textbf{3}} & \multicolumn{4}{c}{\textbf{50}} \\
\cmidrule(lr){3-6} \cmidrule(lr){7-10}
& & \textbf{Gen} & \textbf{Mem} & \textbf{True} & \textbf{Data} & \textbf{Gen} & \textbf{Mem} & \textbf{True} & \textbf{Data} \\
\midrule
\multirow{3}{*}{\shortstack{CFM \\ $(\sigma = 0)$}}
& LogProb   & 4.0150 & 4.0890 & 4.1330 & 4.0155 & 54.8299 & 53.6502 & 52.5094 & 53.6244 \\
& MMD       & 0.0034 & \num{1.758e-06} & 0.0014 & 0.0032 & 0.0021  & \num{9.089e-06}  & 0.0020  & 0.0019  \\
& Sinkhorn  & 0.0730 & \num{1.411e-05} & 0.0637 & 0.0790 & 15.1900 & 0.0045  & 14.3221 & 15.7400 \\
\midrule
\multirow{3}{*}{\shortstack{CFM \\ $(\sigma = 0.05)$}}
 & LogProb   & 4.1270 & 4.0960 & 4.1330 & 4.0155 & 54.7220 & 53.8890 & 52.5094 & 53.6244 \\
& MMD       & 0.0018 & \num{3.105e-05} & 0.0014 & 0.0032 & 0.0020  & \num{6.09e-05}  & 0.0020  & 0.0019  \\
& Sinkhorn  & 0.0680 & \num{3.557e-04} & 0.0637 & 0.0790 & 15.1689 & 0.0304  & 14.3221 & 15.7400 \\
\bottomrule
\end{tabular}
\end{adjustbox}
\end{table}

\subsection{Memorisation for Mixture of Gaussian}
\label{subsec:mixture}
To probe CFM’s tendency to memorize deterministic pairings, we train two variants—one using noiseless interpolants and one with added noise—on an CFM model that maps $\pi_0=\mathcal{N}(0,I_d)$ to a target Gaussian mixture $\pi_1$ (more details for the experiments in Appendix \ref{ass:synth_dets}).  We evaluate in both low ($d=3$) and high ($d=50$) dimensions using three metrics (log‐likelihood under the true mixture, MMD, and Sinkhorn distance) and four comparisons: (i) \emph{Gen}: generated held‑out vs.\ true samples, (ii) \emph{Mem}: generated vs.\ training pair integrations, (iii) \emph{True}: true vs.\ true reference, and (iv) \emph{Data}: training pair vs.\ true samples.  This setup isolates whether noiseless CFM simply memorizes its finite dataset, and whether stochastic interpolants improve generalization without sacrificing sample quality.

As shown in Table~\ref{tab:cfm_sbm_comparison}, CFM tends to memorize the deterministic pairings it is trained on, reflected in very low memorization distances but weaker generalization. In contrast, CFM$(\sigma = 0.05)$ demonstrates consistently lower distances to the true distribution across both dimensions, suggesting superior generalization and less reliance on memorized pairings.

\subsection{Memorisation for CelabA}
\label{subsec:celeba}
To empirically validate Proposition 2, we conduct experiments on the CelebA dataset using Conditional Flow Matching (CFM), with ground-truth optimal transport (OT) pairings from ~\cite{NEURIPS2021_7a6a6127} as reference. Our primary objective is to test the claim that CFM trained with deterministic interpolants memorizes its input pairings, as predicted by theory.

We compare two versions of CFM: CFM$(\sigma = 0)$: trained using deterministic (noise-free) interpolants, CFM$(\sigma = 0.05)$: trained using slightly noisy interpolants.

Both models use identical U-Net architectures. The noise parameter $\sigma$ is not chosen for improved performance, but specifically to break the injectivity between data and interpolant trajectories---thereby disabling the memorization pathway outlined in Proposition 2.

\paragraph{Adversarial Pairings} To directly probe memorization, we construct an adversarial dataset by shuffling the targets of the \cite{NEURIPS2021_7a6a6127} optimal OT pairings. This breaks correct transport structure, producing random pairings with no coherent OT semantics. The model variants are compared on two criteria: generalization (L2 error to true OT targets), and Memorization (L2 error to the shuffled (training) targets). We measure both on held-out subsets (5K and 50K samples). Lower L2 to true OT indicates better generalization, while low L2 to shuffled indicates greater memorization.

\begin{table}[h] 
\caption{Adversarial Parings Results: These results show that CFM with deterministic interpolants ($\sigma = 0$) strongly memorizes non-optimal shuffled targets, minimizing training loss even on arbitrary pairings. Adding slight noise ($\sigma = 0.05$) disrupts this, enabling the model to resist overfitting and generalize toward the true OT map.}
\centering 
\small
\begin{tabular}{lccc} 
\toprule Metric & CFM ($\sigma=0.05$) & CFM($\sigma=0$) \\ 

\midrule 5K Generalization (L2 to OT) & 34.25 $\pm$ 7.54 & 50.40 $\pm$ 16.73 \\ 

5K Memorization (L2 to Shuffled) & 55.02 $\pm$ 16.51 & 28.57 $\pm$ 5.49

\\ 50K Generalization (L2 to OT) & 30.05 $\pm$ 6.77 & 46.78 $\pm$ 14.87

\\ 50K Memorization (L2 to Shuffled) & 56.48 $\pm$ 18.55 & 45.98 $\pm$ 11.85 \\ \bottomrule \end{tabular} \end{table}

\paragraph{Simulated 1-ReFlow} We simulate a 1-step ReFlow scenario to test if iterative flow models reinforce memorization further---a central prediction of Proposition 2. Here, a base CFM model is first trained on random (non-OT) pairings, then used to generate endpoints via ODE integration. We then retrain CFM on these generated endpoints.
\begin{table}[h]
\caption{Simulated 1-Reflow : The findings are consistent: deterministic interpolants ($\sigma = 0$) facilitate memorization of training pairings, while a small amount of noise substantially increases resistance to memorization and restores generalization ability.}
\centering
\small
\begin{tabular}{lcccc}
\toprule
Metric & CFM($\sigma=0.05$) & CFM($\sigma=0$)  \\
\midrule
5K Generalization (L2 to OT) & 31.35 $\pm$ 7.38 & 43.35 $\pm$ 14.21  \\
5K Memorization (L2 to Generated) & 25.08 $\pm$ 8.59 & 8.63 $\pm$ 1.76  \\
50K Generalization (L2 to OT) & 32.54 $\pm$ 8.86 & 38.16 $\pm$ 11.37 \\
50K Memorization (L2 to Generated) & 16.29 $\pm$ 4.82 & 12.76 $\pm$ 4.01  \\
\bottomrule
\end{tabular}
\end{table}

Memorization effects diminish with larger datasets, as fixed model capacity makes perfect memorization infeasible for 50K samples compared to 5K. However, optimization still favors memorization when possible, as predicted by Proposition 2. Larger datasets impede perfect memorization, but do not alter the underlying objective.

Our experiments provide direct empirical support for Proposition 2: CFM trained on deterministic interpolants reliably memorizes its inputs, even when those pairings are random or flawed. Introducing noise to interpolants breaks the pathway for memorization, enabling principled generalization toward the true OT map. These findings highlight a key limitation of deterministic training regimes for flow-based generative models on empirical datasets. For completion we mention FID values of these mdodels in Appendix \ref{app:fid_celeba}.

\section{Conclusion}

We analyzed flow-based models through the lens of gradient variance and showed that deterministic interpolants—especially in Rectified Flows—can lead to memorization and stagnation. Noise, whether explicit (stochastic interpolants) or implicit (random pairings), breaks this effect, improving both generalization and sample quality. Our theoretical results and experiments on Gaussians, Gaussian Mixtures and CelebA demonstrate that variance-sensitive optimization prefers suboptimal flows unless stochasticity is introduced. These findings challenge the reliability of rectified flows and underscore the importance of noise in guiding models toward better transport.

\paragraph{Limitations.} Our evaluation focuses on Gaussians, Gaussian mixtures, and CelebA as the sole real-world benchmark; broader modalities and higher resolutions may surface additional behaviors. While we critique Rectified Flows’ propensity to memorize deterministic pairings, we do not propose a new remedy beyond introducing stochasticity into interpolants. Notably, adding noise is known to steer training toward entropic OT \citep{de2024schrodinger} over many iterations, which in practice manifests as improved generalization rather than a principled cure for deterministic memorization.



\newpage

\bibliography{neurips_2025}
\bibliographystyle{plainnat}

\newpage

\newpage
\appendix

\section{Assumptions}
\label{app:ass}

\begin{assumption}\label{ass:vect}
    Suppose that $v(x, t)$ satisfies the following conditions:
\begin{itemize}
    \item \textbf{Lipschitz continuity in $x$:} There exists $L > 0$ such that for all $x, y \in \mathbb{R}^D$ and $t \in [0,1]$,
    \begin{equation}
        \|v(x, t) - v(y, t)\| \leq L \|x - y\|.
    \end{equation}
    \item \textbf{Linear growth:} There exists $C > 0$ such that for all $x \in \mathbb{R}^D$ and $t \in [0,1]$,
    \begin{equation}
        \|v(x, t)\| \leq C(1 + \|x\|).
    \end{equation}
\end{itemize}
\end{assumption}
With Assumption \ref{ass:vect} in place we can conclude via \cite{arnold1992ordinary}, that the ODE $dx_t=v(x_t, t)dt$ admits unique solutions for all initial conditions $x_0$, and so that $T(x_0) = x_0 + \int_0^1v(x_t, t) dt$ is an injective map.

\section{Proofs - Gaussian Setting: }
\label{app:gauss}

\begin{elemma}
\label{lemma:0_error}
    Let \( q(x) = \frac{1}{x} \), with \( x \in \mathbb{R}^d \). There is no finite parameterization of a Multilayer Perceptron (MLP) that can represent this function with zero error.
\end{elemma}

\begin{proof}
    Assume, for the sake of contradiction, that there exists a finite MLP that represents \( q(x) \) exactly. To avoid issues at \( x = 0 \) where \( q(x) \) becomes undefined, we restrict the domain of \( x \) to \( \mathbb{R} \setminus [a,b] \), where \( 0 \notin [a,b] \) and \( a, b \in \mathbb{R} \). This ensures that \( q(x) \) is Lipschitz continuous on this domain.

    Next, consider the scaling property of \( q(x) \): for any scalar \( c \neq 0 \),
    \(q(cx)  = \frac{1}{c} q(x).\) This property implies that \( q(x) \) scales inversely with \( c \).

    However, standard MLPs do not inherently possess this scaling property. For an MLP to exhibit this behavior, its activation functions would need to satisfy \( \sigma(cx) = \frac{1}{c} \sigma(x) \), which is not true for commonly used activation functions such as ReLU or tanh.

    Since the desired function \( q(x) \) exhibits a scaling property that standard MLPs cannot replicate, we conclude that no finite MLP can represent \( q(x) \) exactly.
\end{proof}

\begin{replemma}{lemma:vec_param}
Let \( X_0 \sim \mathcal{N} (0, \mI_d) \) and \( X_1 \sim \mathcal{N} (\mu, \mM_d) \), where \( \mM_d \) is a positive definite and symmetric matrix. The OT vector field is given by  
\[
\hat v_{OT}(X_t, t, \hat \mTheta, \hat \vtheta) =  \hat\vtheta + \hat \mTheta[\mI_d + t \hat \mTheta]^{-1} (X_t - t \hat \vtheta),
\]
 and the rotating vector field: 
\[
\hat v_{rOT}(X_t, t, \hat \mTheta_{rOT}, \hat \vtheta ) =  \hat\vtheta + \hat \mTheta_{rOT} [\mI_d + t \hat \mTheta_{rOT}]^{-1} (X_t - t \hat \vtheta),
\]
where \( X_t = (1 - t) X_0 + t X_1 \), \( \hat \mTheta = \mM_d^{1/2} - \mI_d \), \( \hat \mTheta_{rOT} = \mM_d^{1/2}  \mR_{k^\circ} - \mI_d \), \( \hat\vtheta = \mu \), and $\mR_{k^{\circ}}$ is a rotation matrix with $\mR_{k^\circ} \neq -\mR_{k^\circ}$. Furthermore, the function \( \hat v_{OT} \) cannot be exactly represented by an MLP, CNN, Transformer architecture when given concatenated inputs \([X_t, t]\).  
\end{replemma}
\begin{proof}
The optimal transport map from \(\mathcal{N}(0,I_d)\) to \(\mathcal{N}(\mu,M_d)\) is $T_{OT}(x)=\mu+M_d^{1/2}x$. Thus, if we denote the optimal coupling by \(X_0^*\) and \(X_1^*\), we have $X_1^*=T(X_0^*)=\mu+M_d^{1/2}X_0^*$. The displacement interpolation is defined as $X_t=(1-t)X_0^*+tX_1^*$. Substitute the expression for \(X_1^*\) into the interpolation:

\begin{align}
X_t &= (1-t)X_0^*+t\Bigl(\mu+M_d^{1/2}X_0^*\Bigr) = \Bigl[(1-t)I_d+t\,M_d^{1/2}\Bigr]X_0^*+t\mu.
\end{align}

Solving for \(X_0^*\) gives
\[
X_0^*=\Bigl[(1-t)I_d+t\,M_d^{1/2}\Bigr]^{-1}(X_t-t\mu).
\]
Now, substitute \(X_0^*\) back into the transport map to get
\begin{align}
X_1^* &= \mu+M_d^{1/2}X_0^* = \mu+M_d^{1/2}\Bigl[(1-t)I_d+t\,M_d^{1/2}\Bigr]^{-1}(X_t-t\mu).
\end{align}

The optimal displacement (vector field) is defined as $v(X_t,t)=X_1^*-X_0^*$. Therefore, we have

\begin{align}
v_{OT}(X_t,t)
&=\mu+M_d^{1/2}\Bigl[(1-t)I_d+t\,M_d^{1/2}\Bigr]^{-1}(X_t-t\mu)\\[1mm]
&\quad -\Bigl[(1-t)I_d+t\,M_d^{1/2}\Bigr]^{-1}(X_t-t\mu)\\[1mm]
&=\mu+\Bigl(M_d^{1/2}-I_d\Bigr)\Bigl[(1-t)I_d+t\,M_d^{1/2}\Bigr]^{-1}(X_t-t\mu).
\end{align}

This is the desired expression for the vector field.

For \( v_{rOT} \), the computation is analogous, with the key difference that the transport map is now given by \( T_{rOT}(x) = \mu + M_d^{1/2} R X_0 \). This defines a valid pushforward map, since for any rotation matrix \( R_{k^\circ} \), if \( Z \sim \mathcal{N}(0, I_d) \), then \( RZ \sim \mathcal{N}(0, I_d) \) as well, due to the rotational invariance of the standard Gaussian. 

Proceeding as before, we solve for \( X_0^* \), substitute it back into the transport map, and derive the corresponding vector field. This yields the expression for \( v_{rOT} \). The derivation follows the same steps as in the case of \( v_{OT} \), with the only difference being that \( M_d^{1/2} \) is replaced by \( M_d^{1/2} R \).

Also we can't have $R=-R$ ($180^\circ$ rotation), because then the inverse $\Bigl[(1-t)I_d -t\mM_d^{1/2}\Bigr]^{-1} = \Bigl[I_d -t(\mM_d^{1/2}+ \mI)\Bigr]^{-1} $ is not computable when $t(\mM_d^{1/2}+ \mI) = \mI$, (for example for $\mM_d=\mI$, and $t=1/2$). For a more complete approach see \cite{liu2022rectified}.

We proceed to prove that this parameterization cannot be represented with zero error by an MLP. Representing the function \( \frac{\theta}{1 + t \theta} \) with zero error is equivalent to representing \( \frac{1}{\theta} \) with zero error. By applying Extra-Lemma \ref{lemma:0_error}, we conclude that such a representation is not possible.
\end{proof}

\begin{repproposition}{prp:lin_gauss}
Let $\pi_0 \sim \mathcal N (0, \mI_d)$, and  $\pi_1 \sim \mathcal N (\mu , \mM_d)$, where $\mM_d$ is a positive definite and symmetric matrix. Let the following push forward maps: $ T_{OT}(X_0) = \mM_d^{1/2} X_0 + \mu  $,  $ T_{rOT}(X_0) = \mM_d^{1/2} \mR X_0  + \mu $, and $ T_{rand}(X_0) = X_1$, where $R$ is a rotation matrix. Let the linear transformation be $v(X_t,t)=\vtheta+\mTheta[I_d+ t \mTheta ]^{-1}(X_t-t\vtheta)$, where $X_t = (1-t)X_0 + tX_1$. Let:

\begin{align}
 L_{\text{MC}}(\mTheta, \vtheta, T, v) = \frac{1}{N} \sum_{s=1}^N \| T(X_0^{(s)}) - X_0^{(s)} - v(X_t^{(s)},t, \mTheta, \vtheta) \|^2.
\end{align}

Then, for optimal, $(\hat \mTheta, \hat \vtheta)$  we have  $\text{Var}[\nabla_\vtheta L_{\text{MC}}
(\hat \mTheta, \hat \vtheta, v_{OT}, T_{rOT})] =  \text{Var}[\nabla_\mTheta L_{\text{MC}}(\hat \mTheta, \hat \vtheta, v_{OT}, T_{OT})] = \text{Var}[\nabla_\vtheta(\hat \mTheta, \hat \vtheta, v_{rOT}, T_{rOT})] =  \text{Var}[\nabla_\mTheta L_{\text{MC}}(\hat \mTheta, \hat \vtheta, v_{rOT}, T_{rOT})] =0$, and $\text{Var}[ \nabla_\vtheta  L_{\text{MC}}(\hat \mTheta, \hat \vtheta, v_{OT}, T^R_{rOT})] =   \frac{4}{N} \|M_d1\|^2,$ and $ 
\text{Var}[ \nabla_\mTheta  L_{\text{MC}}(\hat \mTheta, \hat \vtheta, v_{OT}, T^R_{rOT})] = \frac{8}{N} \text{tr}((M_{d}^{1/2}(R- \mI))^2)$. Moreover, for the maps $T_{rOT}$ and $T_{rand}$ and $v_{OT}$ vector field, the variance of the gradients of the optimal vector field does not increase in regions where the interpolant trajectories come closer together.
\end{repproposition}

\begin{proof}
We break this proof into the computation of variance over time and the value at time zero to simplify computations. 

\paragraph{1. Way 1:  for $t=0$ but closed form is nice:} We compute gradient variance in two ways one relays on the fact that when $T$ is deterministic there is a direct bijection between $X_t$ and $X_0$, this of course wont give us the values in terms of $t$, but the formulas are somehow nicer and easier to work with. We are basically looking at the variance for $(X_0, 0)$
\begin{align}
 \text{Var} &\left[\nabla_{\vtheta} \frac{1}{N} \sum \left\| \mM_d^{1/2} X_0^{(n)} - X_0^{(n)} + \mu - \mTheta X_0^{(n)} - \vtheta \right\|^2 \right] \\
 &= \text{Var} \left[ -\frac{2}{N} \sum \bm{1}^\top \left( \mu + (\mM_d^{1/2} - \mTheta - \mI) X_0^{(n)} - \vtheta \right) \right] \\
 &= \frac{4}{N} \text{Var} \left[ \bm{1}^\top (\mM_d^{1/2} - \mTheta - \mI) X_0^{(n)} \right] \\
 &= \frac{4}{N} \bm{1}^\top (\mM_d^{1/2} - \mTheta - \mI)(\mM_d^{1/2} - \mTheta - \mI)^\top \bm{1}.
\end{align}

\begin{align}
 \text{Var} &\left[\nabla_{\mTheta_i} \frac{1}{N} \sum \left\| \mM_d^{1/2} X_0^{(n)} - X_0^{(n)} - \mu - \mTheta X_0^{(n)} - \vtheta \right\|^2 \right] \\
 &= \text{Var} \left[ -\frac{2}{N} \sum X_0^\top \left( \mu + (\mM_d^{1/2} - \mI - \mTheta) X_0^{(n)} - \vtheta \right) \right].
\end{align}

\paragraph{Gradient variance for $T_{OT}$ and $v_{OT}$} For the rotated case:

Note that our vector field is optimal when \( \hat{\mTheta} = \mM_d^{1/2} - \mI \) and \( \hat{\vtheta} = \mu \). Then we have:
\begin{align}
    \text{Var}\left[\nabla_{\vtheta} L_{MC}(v(\hat{\mTheta}, \hat{\vtheta}, X_0))\right] = 0,
\end{align}
and
\begin{align}
    \text{Var}\left[\nabla_{\mTheta} L_{MC}^{OT}(v(\hat{\mTheta}, \hat{\vtheta}, X_0))\right] 
    = \frac{4}{N} \text{Var}[X_0^\top (\mu - \hat{\vtheta})] 
    = \frac{4 \| \mu - \hat{\vtheta} \|^2}{N} = 0.
\end{align}

\paragraph{Gradient variance for $T_{rOT}$ and $v_{OT}$}  For the rotated case:
\begin{align}
 \text{Var} &\left[\nabla_{\vtheta} \frac{1}{N} \sum \left\| \mM_d^{1/2} \mR X_0^{(n)} - X_0^{(n)} - \mu - \mTheta X_0^{(n)} - \vtheta \right\|^2 \right] \\
 &= \text{Var} \left[ -\frac{2}{N} \sum \bm{1}^\top \left( \mu + (\mM_d^{1/2} \mR - \mTheta - \mI) X_0^{(n)} - \vtheta \right) \right] \\
 &= \frac{4}{N} \text{Var} \left[ \bm{1}^\top (\mM_d^{1/2} \mR - \mTheta - \mI) X_0^{(n)} \right].
\end{align}

At the optimum, where \( \hat{\mTheta} = \mM_d^{1/2} - \mI \), we obtain:
\begin{align}
    \frac{4}{N} \text{Var} \left[ \bm{1}^\top (\mM_d^{1/2} \mR - \mM_d^{1/2}) X_0 \right] 
    &= \frac{4}{N} \bm{1}^\top (\mM_d^{1/2}(\mR - \mI)) (\mM_d^{1/2}(\mR - \mI))^\top \bm{1} \\
    &= \frac{4}{N} \bm{1}^\top (\mM_d^{1/2}(\mR - \mI)) (\mR - \mI)^\top \mM_d^{1/2 \top} \bm{1} \\
    &= \frac{4}{N} \bm{1}^\top \mM_d^{1/2} (2 \mI - \mR^\top -\mR) \mM_d^{1/2 \top} \bm{1} .
\end{align}

For the variance of the gradient w.r.t.\ \( \mTheta \) at the optimum, we get:
\begin{align}
    \text{Var}\left[\nabla_{\mTheta} L_{MC}^{rOT}(\hat{\mTheta}, \hat{\vtheta})\right] 
    &= \frac{4}{N} \text{Var}\left[ X_0^\top \mM_d^{1/2}(\mR - \mI) X_0 \right] \\
    &= \frac{8}{N} \text{tr}\left((\mM_d^{1/2}(\mR - \mI))^2\right).
\end{align}

The last equality follows from the known variance formula of a quadratic form for Gaussian random vectors, specifically when \( X_0 \sim \mathcal{N}(0, \mI) \).

\paragraph{Gradient variance for $T_{rOT}$ and $v_{rOT}$}  For the rotated case:
\begin{align}
 \text{Var} &\left[\nabla_{\vtheta} \frac{1}{N} \sum \left\| \mM_d^{1/2} \mR X_0^{(n)} - X_0^{(n)} - \mu - \mTheta X_0^{(n)} - \vtheta \right\|^2 \right] \\
 &= \text{Var} \left[ -\frac{2}{N} \sum \bm{1}^\top \left( \mu + (\mM_d^{1/2} \mR - \mTheta - \mI) X_0^{(n)} - \vtheta \right) \right] \\
\end{align}

At the optimum, where \( \hat{\mTheta} = \mM_d^{1/2} \mR - \mI \), we obtain:
\begin{align}
    \frac{4}{N} \text{Var} \left[ \bm{1}^\top (\mM_d^{1/2} \mR - \mM_d^{1/2} \mR ) X_0 \right] 
    &= 0
\end{align}

For the variance of the gradient w.r.t.\ \( \mTheta \) at the optimum, we get:
\begin{align}
    \text{Var}\left[\nabla_{\mTheta} L_{MC}^{rOT}(\hat{\mTheta}, \hat{\vtheta})\right] 
    &= \frac{4}{N} \text{Var}\left[ X_0^\top \mM_d^{1/2}(\mR - \mR) X_0 \right] \\
    &= 0
\end{align}

\paragraph{Way 2 - (for any $t$, closed form is not nice): Gradient variance over time} Before analyzing different kinds of pairings that we want to analyze, first, we want a true formulation of the gradients:

\begin{align}
    \text{Var}[ &\nabla_\vtheta L_{\text{MC}}^{type}(\hat \mTheta, \hat \vtheta)] \\
    &= \text{Var}  \left[\nabla_{\vtheta} \frac{1}{N} \sum [|| (T_{type}(X_0^{(n)}) - X_0^{(n)} - v_{\mTheta, \vtheta} (X_t ^{(n)}, t)  ||^2]\right]   \\
     &= \text{Var}  \left[-\frac{2}{N} \sum   (T_{type}(X_0^{(n)}) - X_0^{(n)}  -  \vtheta-\mTheta[I_d+ t \mTheta ]^{-1}(X_t^{(n)}-t\vtheta))^\top ( -  \bm{1} - t \mTheta[I_d+ t \mTheta ]^{-1} \bm{1} )\right] \\
      &= \text{Var}  \left[-\frac{2}{N} \sum   (T_{type}(X_0^{(n)}) - X_0^{(n)}  -  \vtheta- \mTheta[I_d+ t \mTheta ]^{-1}(X_t^{(n)}-t\vtheta))^\top ( [I_d+ t \mTheta ]^{-1} \bm{1} )\right] 
\end{align}

For now, let $\mC_{t,\mTheta} =  [I_d+ t \mTheta ]^{-1} $. We arrive at a more simplified: 

\begin{align}
     \text{Var}[ \nabla_\vtheta L_{\text{MC}}^{type}(\hat \mTheta, \hat \vtheta)] &= \frac{4}{N}\text{Var}   [(T_{type}(X_0) - X_0 -  \vtheta - \mTheta[I_d+ t \mTheta ]^{-1}(X_t-t\vtheta))^\top \mC_{t,\mTheta}  \bm{1} ] \\
     &= \frac{4}{N}\text{Var}   [(T_{type}(X_0) - X_0 - \mTheta[I_d+ t \mTheta ]^{-1}X_t)^\top \mC_{t,\mTheta}  \bm{1} ] \\
     &= (\mC_{t,\mTheta}  \bm{1} )^T\frac{4}{N}\text{Var}   [(T_{type}(X_0) - X_0 - \mTheta[I_d+ t \mTheta ]^{-1}X_t)^\top] \mC_{t,\mTheta}  \bm{1} 
\end{align}
Moving forward to gradients with respect to $\mTheta$ we get:
\begin{align}
     &\text{Var}[\nabla_\mTheta L_{\text{MC}}^{type}(\hat \mTheta, \hat \vtheta)] \\
    &= \text{Var}  \left[\nabla_{\mTheta} \frac{1}{N} \sum [|| (T_{type}(X_0^{(n)}) - X_0^{(n)} - v_{\mTheta, \vtheta} (X_t ^{(n)}, t^{(n)})  ||^2]\right]   \\
     &= \frac{4}{N} \text{Var}  \left[(T_{type}(X_0) - X_0  -  \vtheta - \mTheta[I_d+ t \mTheta ]^{-1}(X_t-t\vtheta))^\top ( (t\mC_{t,\mTheta}-I)\mC_{t,\mTheta} (X_t - t\vtheta) )\right] \\
\end{align}
\textbf{CASE 1 (OT):} 
This case is nice because we just need to prove that:
\begin{align}
    Var[(T_{OT}(X_0) - X_0  -  \vtheta - \mTheta[I_d+ t \mTheta ]^{-1}(X_t-t\vtheta))^\top] = \bm 0,
\end{align}
which actually happens once we replace $\mTheta = (M^{1/2} - I)$, and $\vtheta = \mu$. We will show this for completion:
\begin{align}
    &M^{1/2}X_0 - X_0   - \mTheta[I_d+ t \mTheta ]^{-1}X_t  \\
    &= (M^{1/2} - I)(I_d + t (M^{1/2} - I))^{-1}(X_0 + t(M^{1/2}-I)X_0   - tM^{1/2}X_0 -(1-t)X_0)  \\
    &= (M^{1/2} - I)(I_d + t (M^{1/2} - I))^{-1}\bm 0 = \bm 0
\end{align}
\textbf{Remark} If our network is not quite at the optima so $M^{1/2}X_0 - X_0   - \mTheta[I_d+ t \mTheta ]^{-1}X_t = \epsilon$ we can notice that the variance of $\vtheta$ and  $\mTheta$  decrease in time because $\frac{1}{1+tx}$ and $\frac{1+ tx+ t}{(1+tx)^2}$ are decreasing  in t. 


\paragraph{No connection between gradient variance and crossings} To show the lack of connection between crossings and variance of gradients, it is enough to offer counter-examples. In the rotation case, we will look at the variance when the rotation matrix does a $180^o$ degree rotation, which means all interpolation lines should meet at $t=1/2$. As we will show, there is no peak in variance at time $t=1/2$. In the Gaussian case, we will observe a very similar behaviour. 

\textbf{Case 2 (rOT):}

\textit{1. For $\vtheta$ the variance of the gradients will be:}
\begin{align}
         \text{Var}[ \nabla_\vtheta L_{\text{MC}}^{type}(\hat \mTheta, \hat \vtheta)] &=  \frac{4}{N}\bm{1}^T \mC_{t, \hat\mTheta}^T Var[(M^{1/2}R -I - \hat \mTheta \mC_{t, \hat\mTheta} (t(M^{1/2}R - I ) - I  ) )X_0] \mC_{t,\hat \mTheta}\bm{1} \\
         &= \frac{4}{N}\bm{1}^T \mC_{t, \hat\mTheta}^T  A^T A \mC_{t,\hat \mTheta}\bm{1},
\end{align}
where $A = M^{1/2}R -I - (M^{1/2} - I) [I + t(M^{1/2}-I)]^{-1}(t(M^{1/2}R - I ) + I  )$, with $ A^T A \approx M + I - R - R^T + 2(1 - t)(R - I)^T (M^{1/2} - I)(R - I)$, for small $M^{1/2} - I$ (approximation obtained via taylor expansion). 

\textbf{Counter-example} For $R=-I$ and $M^{1/2} = 2 I$(which is $180^o$ rotation so all interpolants meet at one point) we have $A = -3I - [ I - t I]^{-1}( - 3It + I ) $ so $A^TA = (\frac{16}{1-t})^2 I$. The variance in this case will be increasing for $t\in[0, 1]$. This aligns with our empirical results, however, it does not align with what is believed in literature, that the variance would peak at $t=1/2$, because crossings increase variance. 

\textit{2. For $\mTheta$, the variance of the gradients will be:}

\begin{align}
     &\text{Var}[\nabla_\mTheta L_{\text{MC}}^{rOT}(\hat \mTheta, \hat \vtheta)] \\
     &= \frac{4}{N} \text{Var}  \left[(T_{rOT}(X_0) - X_0  -  \vtheta - \mTheta[I_d+ t \mTheta ]^{-1}(X_t-t\vtheta))^\top ( (t\mC_{t,\mTheta}-I)\mC_{t,\mTheta} (X_t - t\vtheta) )\right] \\
     &=  \frac{4}{N} \text{Var} [(M^{1/2}R - I  - (M^{1/2} - I)[I_d + t (M^{1/2} - I)]^{-1}(t M^{1/2 } R + (1-t)I) X_0)^T \\
     & (t\mC_{t,\mTheta}-I)\mC_{t,\mTheta} (tM^{1/2} + (1-t) I) X_0] \\
     &=  \frac{4}{N} \text{Var}[(A X_0)^T  (t\mC_{t,\mTheta}-I)\mC_{t,\mTheta}  (tM^{1/2}R + (1-t) I) X_0] = \frac{4}{N} \text{Var}[X_0^T A^T B X_0] = 2Tr ((AB)^2)
\end{align}
\textbf{Counter-example} Let $M^{1/2} = 2I$, and $R=-I$.  Then $A = \frac{4}{1+t} I$, and $B = \frac{(3t - 1)}{(1+t)^2}$ so $\text{Var}[\nabla_\mTheta L_{\text{MC}}^{rOT}(\hat \mTheta, \hat \vtheta)] =\frac{4}{N}  Var[\frac{16}{(1+t)^2}\frac{(3t - 1)}{(1+t)^2} X_0^T X_0] = \frac{8d}{N}\frac{(16(3t-1))^2}{(1+t)^8}$ which is decreasing for $t\in [1/9,1/3]$ and increasing $t \in [1/3, 1]$.

\textbf{Case 3 (Random):} 

\textit{1. For $\vtheta$ the variance of the gradients will be:}
\begin{align}
         \text{Var}[ \nabla_\vtheta L_{\text{MC}}^{type}(\hat \mTheta, \hat \vtheta)] &=  \frac{4}{N}\bm{1}^T \mC_{t, \hat\mTheta}^T Var[(X_1 - X_0 - \hat \mTheta \mC_{t, \hat\mTheta} (tX_1 - (1-t)X_0  ) )] \mC_{t,\hat \mTheta}\bm{1} \\
         &=  \frac{4}{N}\bm{1}^T \mC_{t, \hat\mTheta}^T [Var[(I - \hat \mTheta \mC_{t, \hat\mTheta} t)X_1] + Var[(-I - \hat \mTheta \mC_{t, \hat\mTheta} (1-t)I)X_0]] \mC_{t,\hat \mTheta}\bm{1},
\end{align}
because of independence of $X_0$, and $X_1$. Continuing:
\begin{align}
    Var[(I - \hat \mTheta \mC_{t, \hat\mTheta} t)X_1] &=  (I- t (M^{1/2} - I)(I + t(M^{1/2}- I))^{-1})^{T} M \\
    &(I- t (M^{1/2} - I)(I + t(M^{1/2} - I))^{-1}) \\
    &= (I + t(M^{1/2}- I))^{-1} M (I + t(M^{1/2}- I))^{-1},
\end{align}
and,
\begin{align}
    Var[(I - \hat \mTheta \mC_{t, \hat\mTheta} (1-t))X_0] =  &(I- (1-t)(M^{1/2} - I)(I + t(M^{1/2}- I))^{-1})^{T} \\
    &(I- (1-t)(M^{1/2} - I)(I + t(M^{1/2} - I))^{-1}).
\end{align}
\textbf{Counter-example:} For $M^{1/2} = 2I$ we have $ Var[(I - \hat \mTheta \mC_{t, \hat\mTheta} t)X_1] = 4 \frac{1}{(1+t)^2} I$ which is decreasing for $t\in [0,1]$. We have $Var[(I - \hat \mTheta \mC_{t, \hat\mTheta} (1-t))X_0] = 4\frac{t^2}{(1+t)^2} I$. That is $4\frac{t^2 + 1}{(1+t)^2}$ which is decreasing for $t \in [0,1]$. We have the overall variance:
\begin{align}
    \text{Var}[ \nabla_\vtheta L_{\text{MC}}^{type}(\hat \mTheta, \hat \vtheta)] = 4 \bm1^T \frac{1}{1+t} \frac{t^2 + 1}{(1+t)^2}\frac{1}{1+t} I 1 = 4d \frac{t^2+1}{(1+t)^4}, 
\end{align}
which is decreasing in $t \in [0,1]$.

\section{Proofs - Reflow}
\label{app:reflow}

\begin{replemma}{lem:straight_rect}

Let $\pi_1$ and $\pi_2$ be two distributions on $\mathbb{R}^N$ admitting densities, and let $(Z_0,Z_1)$ be their straight‐line coupling.  If we apply Rectified Flows again using the noise‐free interpolant, we recover the same coupling.
\end{replemma}

\begin{proof}
Since $(Z_0,Z_1)$ is the straight coupling, for each $t\in[0,1]$ the interpolation
\[
Z_t \;=\; (1-t)\,Z_0 \;+\; t\,Z_1
\]
is $\sigma(Z_0,Z_1)$–measurable.  Define
\[
v(z,t)\;:=\;\mathbb{E}\bigl[\,Z_1 - Z_0 \,\bigm|\, Z_t = z\,\bigr],
\]
so that by the Doob–Dynkin lemma there is a (deterministic) function $v$ satisfying
\[
v\bigl(Z_t,t\bigr)
=\mathbb{E}\bigl[\,Z_1 - Z_0 \,\bigm|\, Z_t \bigr]
= Z_1 - Z_0
\quad\text{almost surely.}
\]
Hence if we re‐solve the same linear ODE
\[
\frac{dX_t}{dt}
= v\bigl(X_t,t\bigr)
\quad\text{with}\quad X_0=Z_0,
\]
the unique solution is exactly
\[
X_t
= Z_0 + t\,(Z_1 - Z_0)
= Z_t.
\]
Because the noise‐free Rectified Flow minimizes the mean‐squared drift error,
\[
\mathbb{E}\Bigl\|v(Z_t,t) - \bigl(Z_1 - Z_0\bigr)\Bigr\|^2 = 0,
\]
no further change occurs.  Thus the pairing remains $(Z_0,Z_1)$.
\end{proof}


    


\begin{repproposition}{prop:det_rect}
Let $\pi_0$ and $\pi_1$ be two probability densities on $\mathbb{R}^D$, and let $T:\mathbb{R}^D\ \to \mathbb{R}^D$ be a deterministic transport map pushing $\pi_0$ to $\pi_1$ (so if $Z_0\sim\pi_0$, then $Z_1=T(Z_0)\sim\pi_1$).  Suppose we draw a finite dataset of \(N\) i.i.d.\ samples $ \{ Z_0^{(i)},\,Z_1^{(i)}=T(Z_0^{(i)})\}_{i=1}^N $,
and for each \(i\) we also sample \(m\) time‐points
\(\{t^{(i,j)}\}_{j=1}^m\subset[0,1]\) (e.g.\ uniformly).  Let
$Z_t^{(i,j)} \;=\; (1-t^{(i,j)})\,Z_0^{(i)} \;+\; t^{(i,j)}\,Z_1^{(i)}.$
Define the empirical loss over this “doubly indexed” dataset by
\[
L_{\mathrm{MC}}^{\mathrm{det}}(v_\Theta)
\;=\;\frac{1}{N\,m}
\sum_{i=1}^N \sum_{j=1}^m
\Bigl\|
\bigl(Z_1^{(i)} - Z_0^{(i)}\bigr)
\;-\;
v_\Theta\bigl(Z_t^{(i,j)},\,t^{(i,j)}\bigr)
\Bigr\|^2.
\]
Then there exists a (deterministic) vector field \(v\) attaining zero loss:
$
L_{\mathrm{MC}}^{\mathrm{det}}(v) \;=\; 0.
$
\end{repproposition}

\begin{proof}

   Our finite dataset is
   \[
     \mathcal{D}
     = \Bigl\{
       \bigl(Z_0^{(i)},\,Z_1^{(i)},\,t^{(i,j)},\,Z_t^{(i,j)}\bigr)
       : i=1,\dots,N;\; j=1,\dots,m
     \Bigr\}.
   \]
   We may view this as drawing from the discrete joint “empirical” measure
   \[
     \hat P \;=\;
     \frac{1}{N\,m}
     \sum_{i=1}^N \sum_{j=1}^m
     \delta_{\bigl(Z_0^{(i)},\,Z_1^{(i)},\,t^{(i,j)},\,Z_t^{(i,j)}\bigr)}.
   \]

   For each datapoint \(\bigl(Z_t^{(i,j)},t^{(i,j)}\bigr)\) we know
   \[
     Z_1^{(i)} - Z_0^{(i)}
     \;=\;
     \mathbb{E}\bigl[\,Z_1 - Z_0 \,\bigm|\,
       Z_0=Z_0^{(i)},\,Z_1=Z_1^{(i)},\,t=t^{(i,j)},\,Z_t=Z_t^{(i,j)}
     \bigr].
   \]
   
    Let $\mathcal Z = \bigcup_{i\in N, t\in[0,1]} (Z^{(i)}_t, t)$. Let $\mathcal{A} = \bigcap _{i\in N, t\in[0,1]} (Z^{(i)}_t, t)$.

    The probability of sampling ${t^{(i,j)}} = a \in \mathcal A$ is $0$, because $t$ lives in a two-dimensional distribution and the points of intersection are a countable union (maximum $N!$ intersection) of one-dimensional spaces.
    
   \[
     \mathbb{E}\bigl[\,Z_1 - Z_0 \,\bigm|\,
       Z_0=Z_0^{(i)},\,Z_1=Z_1^{(i)},\,t=t^{(i,j)},\,Z_t=Z_t^{(i,j)} ] = \mathbb{E}\bigl[\,Z_1 - Z_0 \,\bigm|\,
       \,t=t^{(i,j)},\,Z_t=Z_t^{(i,j)}
     \bigr].
   \]

   Hence we can define the exact conditional‐expectation vector field
   \[
     v\bigl(z,t\bigr)
     \;:=\;
     \mathbb{E}_{\hat P}\bigl[\,Z_1 - Z_0
       \,\bigm|\,
       Z_t = z,\;t\bigr],
   \]
   which on each of our training points \((z,t)=(Z_t^{(i,j)},t^{(i,j)})\)
   satisfies
   \[
     v\bigl(Z_t^{(i,j)},\,t^{(i,j)}\bigr)
     = Z_1^{(i)} - Z_0^{(i)}.
   \]

   Substituting this \(v\) into the Monte Carlo loss gives, term by term,
   \[
     \bigl\|\,Z_1^{(i)} - Z_0^{(i)}
       \;-\;
       v\bigl(Z_t^{(i,j)},\,t^{(i,j)}\bigr)
     \bigr\|^2
     =
     \bigl\|\,Z_1^{(i)} - Z_0^{(i)}
       \;-\;
       (Z_1^{(i)} - Z_0^{(i)})
     \bigr\|^2
     = 0.
   \]
   Averaging over all \(i,j\) yields \(L_{\mathrm{MC}}^{\mathrm{det}}(v)=0.\)

Thus the loss can be driven exactly to zero on the finite sample by choosing
\(v\) to interpolate the known displacements \(Z_1^{(i)}-Z_0^{(i)}\) at the
sampled intermediate points \((Z_t^{(i,j)},t^{(i,j)})\).
\end{proof}

\begin{proposition}[Interpolating lines don't meet in high dimensions]
\label{prp:cross}
    Let \( x_0, x_1 \sim \pi_0(x) \) and \( y_0, y_1 \sim \pi_1(y) \), where \( \pi_0 \) and \( \pi_1 \) are  probability distributions on \( \mathbb{R}^d \) admitting a density. Define the linear interpolants \( l_i(t_i) = (1-t_i) x_i + t_i y_i \) for \( i \in \{0, 1\} \).
    \begin{itemize}
        \item[A.]  For $d \geq 2$, the lines $l_0$ and $l_1$ cross at $t=t_i=t_j \in (0,1)$ with probability $0$.
        \item[B.] For $d>2$ the two lines intersect for \( t_i, t_j \in (0,1) \) with probability 0.    
    \end{itemize}
\end{proposition}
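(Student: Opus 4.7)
The plan is to reduce both statements to a conditioning argument paired with a dimension count. In each case I would fix (condition on) three of the four random points and argue that the intersection event then forces the remaining point to lie in a Lebesgue-null subset of $\mathbb{R}^d$. Since $\pi_1$ admits a density with respect to Lebesgue measure, the conditional probability of this event is zero, and Tonelli's theorem then delivers the unconditional probability zero after integrating out the remaining randomness.

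\textbf{Part A (equal times).} I would start by rewriting $l_0(t) = l_1(t)$ as
\[
y_1 \;=\; y_0 \;+\; \tfrac{1-t}{t}\,(x_0 - x_1),
\]
which is obtained by collecting the $t$ and $1-t$ terms on opposite sides. With $(x_0, x_1, y_0)$ fixed and $t$ ranging over $(0,1)$, the right-hand side sweeps out a smooth one-dimensional curve (an open ray) in $\mathbb{R}^d$. For $d \ge 2$ this curve has $d$-dimensional Lebesgue measure zero, hence $\pi_1$-measure zero by absolute continuity, and integrating the joint density of $(x_0, x_1, y_0)$ against this bound finishes the case.

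\textbf{Part B (distinct times).} Here I would solve the intersection condition $(1-t_0)x_0 + t_0 y_0 = (1-t_1)x_1 + t_1 y_1$ for $y_1$:
\[
y_1 \;=\; \tfrac{1}{t_1}\,\bigl[(1-t_0)x_0 + t_0 y_0 - (1-t_1)x_1\bigr].
\]
Again conditioning on $(x_0, x_1, y_0)$, the map $(t_0, t_1) \mapsto y_1$ from $(0,1)^2$ into $\mathbb{R}^d$ is smooth, so its image is contained in a countable union of $C^1$ images of compact two-dimensional sets. As soon as $d > 2$, any such set has $d$-dimensional Lebesgue measure zero, so by the density assumption on $\pi_1$ the conditional probability vanishes, and Tonelli again concludes.

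The only mildly delicate step is justifying that these parameterized images are genuinely Lebesgue-null in $\mathbb{R}^d$; I would invoke the standard fact that a Lipschitz (in fact $C^\infty$) image of a set of Hausdorff dimension $k$ has $d$-dimensional Lebesgue measure zero whenever $k < d$, which follows from an elementary covering argument. Everything else is bookkeeping: exhausting $(0,1)$ and $(0,1)^2$ by countably many compact subsets, and applying Tonelli to swap the conditioning order against the product density.
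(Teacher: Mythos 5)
Your proposal is correct and follows essentially the same approach as the paper: rearrange the intersection condition to express $y_1$ as a function of the remaining variables, observe that this confines $y_1$ to a one- (Part~A) or two-dimensional (Part~B) subset of $\mathbb{R}^d$, and invoke absolute continuity of $\pi_1$ with respect to Lebesgue measure to conclude probability zero, integrating out the conditioning. You are somewhat more careful than the paper in spelling out the Tonelli step, the compact exhaustion of the parameter domain, and the fact that $C^1$ images of low-dimensional sets are Lebesgue-null, but the underlying argument is identical.
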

\begin{proof}
 Throughout this proof, we use the following theorem: Let \( X \) be a random variable with a continuous probability distribution in \( \mathbb{R}^n \), and let \( A \) be a lower-dimensional subset of \( \mathbb{R}^n \). Since $\mathbb{P}$ admits a density, it follows that it is absolutely continuous wrt to the Lebesgue measure $\lambda$ then as $\lambda(A)=0$ by absolute continuity, we have that \( \mathbb{P}(X \in A) = 0 \).
 
\textbf{A.}     Suppose the lines \( l_0(t) \) and \( l_1(t) \) intersect at some \( t = \hat{t} \in (0,1) \). This implies 
    \[
    l_0(\hat{t}) = l_1(\hat{t}),
    \]
    which expands to 
    \[
    (1-\hat{t})x_0 + \hat{t}y_0 = (1-\hat{t})x_1 + \hat{t}y_1.
    \]
    Rearranging terms, we find 
    \[
    y_1 = \frac{1-\hat{t}}{\hat{t}}(x_0 - x_1) + y_0.
    \]
    To compute the probability of such an intersection, note that \( y_1 \) must lie exactly on the affine subspace defined by the above equation, which is a one-dimensional line segment in \( \mathbb{R}^d \). 

    The joint probability of the points \( (x_0, x_1, y_0, y_1) \) can be written as 
    \[
    \mathbb{P}(l_0(t) \text{ intersects } l_1(t) \text{ for } t \in (0,1)) = \mathbb{P}(x_0, x_1, y_0) \cdot \mathbb{P}(y_1 = \frac{1-\hat{t}}{\hat{t}}(x_0 - x_1) + y_0 \mid x_0, x_1, y_0).
    \]

    Since \( \pi_1(y) \) is continuous and differentiable, the probability density of \( y_1 \) lying on any lower-dimensional subspace (e.g., a line segment) in \( \mathbb{R}^d \), with \( d > 2 \), is zero. Therefore, 
    \[
    \mathbb{P}(y_1 = \frac{1-\hat{t}}{\hat{t}}(x_0 - x_1) + y_0 \mid x_0, x_1, y_0) = 0,
    \]
    which implies 
    \[
    \mathbb{P}(l_0(t) \text{ intersects } l_1(t) \text{ at } t =\hat t \text{ for } t \in (0,1)) = 0.
    \]
    Hence, the lines \( l_0(t) \) and \( l_1(t) \) intersect with probability zero for \( t \in (0,1) \).
    \textbf{B.}
        Suppose the lines \( l_0(t_0) \) and \( l_1(t_1) \) intersect at some \( t_0  = \hat{t}_0, t_1  = \hat{t}_1 \text{for} t_0, t_1 \in (0,1) \). This implies 
    \[
    l_0(\hat{t}_0) = l_1(\hat{t}_1),
    \]
    which expands to 
    \[
    (1-\hat{t}_0)x_0 + \hat{t}_0 y_0 = (1-\hat{t}_1)x_1 + \hat{t}_1y_1.
    \]
    Rearranging terms, we find 
    \[
    y_1 = \frac{(1-\hat{t}_0)x_0 - (1-\hat{t}_1) x_1 + \hat t_0 y_0}{\hat t_1}.
    \]

which for $\hat t_0, \hat t_1 \in (0,1)$ we have that $y_1$ would belong to a 2D surface. Just like before we have: 
\begin{align*}
\mathbb{P}(l_0(t_0) \text{ intersects } l_1(t_1) \text{ for } \hat{t}_i \in (0,1)) &= 
\mathbb{P}(x_0, x_1, y_0) \cdot \\
&\mathbb{P}\left(y_1 = \frac{(1-\hat{t}_0)x_0 - (1-\hat{t}_1)x_1 + \hat{t}_0 y_0}{\hat{t}_1} 
\,\middle|\, x_0, x_1, y_0\right) = 0.
\end{align*}
\end{proof}

\section{Counter Example for straightness after one iteration}
\label{app:ce}
\begin{proposition}[Limitations of ReFlow Iterations on Noiseless Interpolants]
Let $\pi_0, \pi_1 \subset \mathbb{R}^D$, with $D > 2$. Let $(Z_0, Z_1) = \texttt{1-ReFlow}(X_0, X_1)$ denote the coupling obtained after one ReFlow iteration. \textbf{Under the assumption that the interpolant $p(z, t) = (1-t)z + tT(z)$ is injective in $z$ for each $t$}, there exists a vector field $\hat{v}_1(x_t, t)$ such that performing a second ReFlow iteration using $(Z_0, Z_1)$ yields the same coupling:
\[
\texttt{2-ReFlow}(X_0, X_1) = (Z_1, Z_0).
\]
Moreover, this second flow $\hat{v}_1(x_t, t)$ generates straight-line paths and achieves zero loss. Therefore, further ReFlow iterations do not alter the couplings.
\end{proposition}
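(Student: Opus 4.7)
The plan is to construct $\hat{v}_1$ explicitly as an ``oracle'' vector field that reads off the displacement from any interpolant point, and then verify in turn the three claimed properties: zero loss, straight-line trajectories, and invariance under further ReFlow steps.

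\textbf{Step 1 (construction).} For each $t \in [0,1]$ define the interpolant map
\[
\phi_t(z) \;:=\; (1-t)\,z \;+\; t\,T(z),
\]
which coincides with the paper's $p(z,t)$, where $T$ is the deterministic transport produced by the first ReFlow iteration. The hypothesis states $\phi_t$ is injective in $z$, so its inverse $\phi_t^{-1}$ is well-defined on the image $\phi_t(\mathbb{R}^D)$. Set
\[
\hat{v}_1(x,t) \;:=\; T\bigl(\phi_t^{-1}(x)\bigr) \;-\; \phi_t^{-1}(x).
\]
Note that at the endpoints $\phi_0 = \mathrm{id}$ and $\phi_1 = T$, and both are injective, so $\hat{v}_1$ extends to all of $[0,1]$.

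\textbf{Step 2 (zero loss and straight paths).} Evaluate $\hat{v}_1$ along the interpolant $x_t = (1-t)Z_0 + t\,Z_1 = \phi_t(Z_0)$. By injectivity, $\phi_t^{-1}(x_t) = Z_0$, hence
\[
\hat{v}_1(x_t, t) \;=\; T(Z_0) - Z_0 \;=\; Z_1 - Z_0,
\]
so the CFM integrand $\|(Z_1-Z_0) - \hat{v}_1(x_t,t)\|^2$ vanishes identically and $L_{CFM}(\hat{v}_1) = 0$. Moreover, the ODE $\dot X_t = \hat{v}_1(X_t, t)$ initialized at $X_0 = Z_0$ admits the straight-line solution $X_t = Z_0 + t(Z_1 - Z_0)$, since along that curve $\hat{v}_1 \equiv Z_1 - Z_0$. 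Integrating to $t=1$ returns $X_1 = Z_1$, so the second ReFlow iteration reproduces the coupling $(Z_0, Z_1)$.

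\textbf{Step 3 (idempotence).} Since the coupling $(Z_0, Z_1)$ is now generated by a straight-line flow (Step 2), the hypotheses of Lemma \ref{lem:straight_rect} are satisfied. Applying that lemma shows that every further ReFlow iteration returns the same coupling, establishing the stated invariance.

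\textbf{Main obstacle.} The subtle point is making $\hat{v}_1$ sufficiently regular for the ODE in Step 2 to have a unique solution, rather than being merely well-defined pointwise: injectivity of $\phi_t$ guarantees a set-theoretic inverse, but not continuity or a Lipschitz bound. Bridging this gap calls for either invoking Assumption \ref{ass:vect} on $T$ together with an inverse-function argument on $\phi_t$, or restricting attention to the data-supporting trajectories where uniqueness holds by construction since each interpolant line determines its unique pair $(Z_0, Z_1)$. This also clarifies why the ``$D > 2$'' dimension hypothesis is used: it is what lets Proposition \ref{prp:cross} rule out generic intersections of interpolants and hence underwrites the injectivity assumption in the first place.
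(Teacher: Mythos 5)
Your construction is essentially the paper's own: you define $\hat v_1$ via the inverse of the interpolant map $\phi_t$ and read off the displacement, verify zero loss and straight-line solutions of the ODE, and conclude idempotence (you cite Lemma~\ref{lem:straight_rect} explicitly, where the paper leaves this step implicit). Your ``main obstacle'' remark about the regularity of $\phi_t^{-1}$ is a fair caveat that the paper glosses over, although your suggested link between the $D>2$ hypothesis and Proposition~\ref{prp:cross} does not really apply here — $T$ is a fixed deterministic map, not an independent random coupling, so almost-sure non-intersection does not follow and the injectivity of $p(z,t)$ must simply be assumed, as the statement does.
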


\begin{proof}[Annotated Proof]
Let $v_0(x_t, t)$ be the learned vector field after CFM, with Assumption~\ref{ass:vect}. From \cite{arnold1992ordinary}, the transport map $T(z_0) = z_0 +  \int_0^1 v(z_t, t) dt$ with $z_0 \in \pi_0$ is injective.

\textbf{Step 1: Injectivity of $T$}

\begin{itemize}
    \item \emph{Assumption:} $T$ is injective (guaranteed by properties of $v$, e.g., Lipschitz, linear growth).
    \item \emph{Potential Failure:} If $T$ is not injective, the argument fails immediately.
\end{itemize}

\textbf{Step 2: Injectivity of the Interpolant $p(z, t)$}

We claim that $p(z, t) = (1-t)z + tT(z)$ is also injective in $z$ for each $t \in [0,1]$.

\begin{itemize}
    \item \emph{Proof (by contradiction):} Suppose $p(z_0^{(1)}, \hat{t}) = p(z_0^{(2)}, \hat{t})$ for some $\hat{t}$ and $z_0^{(1)} \neq z_0^{(2)}$.
    \item Rearranging:
    \[
    T(z_0^{(2)}) - T(z_0^{(1)}) = \frac{1-\hat{t}}{\hat{t}} (z_0^{(1)} - z_0^{(2)})
    \]
    \item This statement doesn't really contradict with $T$ injectivity.
    \item \emph{Failure Point:} As shown in Counterexample~\ref{ce:1}, this is \textbf{not always true}. For example, if $T$ is a rotation plus translation, $p(z, t)$ can fail to be injective even if $T$ is injective.
\end{itemize}

One might argue that such a transport map is unlikely to be learned in practice; however, this is not the point. Our argument is purely theoretical: injectivity of $T$ does not imply injectivity of $I(x_0, T(x_0), t)$, and thus $1$-ReFlow is insufficient even under standard regularity assumptions.

\begin{figure}
    \centering
    \includegraphics[scale=0.4]{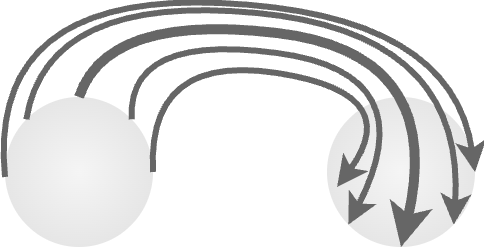}
    \caption{$180^\circ$ rotation being realized by a continuous vector field.}
    \label{fig:ce}
\end{figure}
\textbf{Step 3: Construction of Inverse and New Vector Field}

Assuming injectivity, we can define an inverse $f^{-1}(z_t, t) = z_0$, and then set $v_1(z_t, t) = T(f^{-1}(z_t, t)) - f^{-1}(z_t, t)$.

\begin{itemize}
    \item \emph{Dependency:} This construction \textbf{only works if $f^{-1}$ exists}, i.e., if $p(z, t)$ is injective.
    \item \emph{Failure Point:} If $p(z, t)$ is not injective, $f^{-1}$ is not well-defined, and this construction fails.
\end{itemize}

\textbf{Step 4: Straight-Line Paths and Zero Loss}

Because $z_1 - z_0$ is uniquely determined by $z_t$, we have $\int_0^{1}\mathbb{E}[\operatorname{Var}(Z_1 - Z_0| Z_t)] = 0$, implying straight paths and zero loss.

\begin{itemize}
    \item \emph{Dependency:} Again, relies on the injectivity of $p(z, t)$.
\end{itemize}
\end{proof}

\begin{repcounterexample}{ce:1}
Let $\pi_0 \sim \mathcal{N}(0, I_d)$ and $\pi_1 \sim \mathcal{N}(5, I_d)$, and let $T(x_0) = R_{180^\circ} x_0 + 5$, where $R_{180^\circ}$ is a $180^\circ$ rotation. $T$ is injective, but the interpolant $I(z_0, T(z_0), t) = (1-t)z_0 + tT(z_0)$ is \textbf{not} injective (distinct $z_0$ can map to the same $x_t$ for some $t$). Thus, $f(x_t, t) = x_0$ is not well-defined, and the construction of $v_{\text{new}}(x_t, t)$ fails.
\end{repcounterexample}

\begin{table}[h]
\centering
\renewcommand{\arraystretch}{1.2}
\begin{tabular}{|c|l|l|}
\hline
\textbf{Step} & \textbf{Assumption Needed} & \textbf{Where It Can Fail} \\
\hline
1 & $T$ injective & Pathological $T$ \\
2 & $p(z, t)$ injective for $t$ & Nonlinear $T$ (e.g., rotations) \\
3 & $f^{-1}$ exists & $p(z, t)$ not injective \\
4 & Unique $z_0$ for each $z_t$ & $p(z, t)$ not injective \\
\hline
\end{tabular}
\caption{Summary of dependencies and failure points in the proof.}
\end{table}

\paragraph{Final Note.}
\textbf{This proposition is only valid under the additional assumption that the interpolant $p(z, t)$ is injective in $z$ for each $t$.} The counterexample demonstrates that this is not always the case, even when $T$ is injective. Therefore, care must be taken before applying this argument in general settings.
\end{proof}

\section{First iteration of SBM Asymmetry}
\label{sec:nightmare}
To test whether our gradient‐variance diagnostic extends beyond Gaussians, we trained four U‑Net models to learn the transport map from $\pi_0 = \mathcal{N}(0, I_d)$ to $\pi_1 \sim $CIFAR-10, using both noiseless and noisy interpolants (see for experimental details Appendix~\ref{app:cif_dets}). In all settings (Figure~\ref{tuition}), the backward field's gradient variance peaks at $t = 0$, where CIFAR-10 appears as input. Conversely, the forward field's variance peaks at $t = 1$, when CIFAR-10 appears as the output.

To assess generation stability, we repeatedly integrated 64 fixed datapoints across 100 training checkpoints, sampling at regular intervals near the FID/NLL optima (Table~\ref{tab:fid_nll_var}). We report the resulting sample variances in the table's fourth column:
\[
  \Var[x_1^{\mathrm{gen}}] \quad \text{and} \quad \Var[x_0^{\mathrm{gen}}],
\]
which show consistently higher variability in the forward direction, suggesting reduced robustness during forward-time sampling.

\begin{observation}
The observed difference in sample variance across vector fields optimized near the optima may potentially impact subsequent iterations of SBM \citep{shi2024diffusion}. However, precisely characterizing how this affects theoretical bounds is non-trivial and is left as a promising avenue for future research. 
\end{observation}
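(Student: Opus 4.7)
The plan is to turn this informal observation into a quantitative claim by tracing how direction-asymmetric sample variance at one SBM iteration propagates into the bounds governing the next. First I would fix notation: at iteration $k$ of SBM, forward and backward fields $v^{(k)}_\to, v^{(k)}_\leftarrow$ are trained on the current coupling $\pi^{(k)}$, then the next coupling $\pi^{(k+1)}$ is produced by integrating one of them against $\pi_0$ or $\pi_1$. The measured quantities $\Var[x_1^{\mathrm{gen}}]$ and $\Var[x_0^{\mathrm{gen}}]$ at near-optimal checkpoints quantify the fluctuation of the pushforwards $(v^{(k)}_\to)_\#\pi_0$ and $(v^{(k)}_\leftarrow)_\#\pi_1$ over the flat basin of near-minimizers. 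The target is an inequality in which these two variances appear with distinct, asymmetric coefficients in the SBM contraction bound.

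Second, I would relate the observable sample variance at the generation level to the loss-landscape geometry studied in Sections~\ref{sec:gaussian}--\ref{sec:rect}. Modelling the CFM loss locally as quadratic, parameter fluctuations $\delta\theta$ around an optimum have covariance proportional to $H^{-1}\,\Var[\nabla_\theta L_{\mathrm{MC}}]\,H^{-1}$, so the direction with larger gradient variance (forward, per Figure~\ref{tuition}) admits a wider near-optimal tube. A Gronwall-type stability estimate for the ODE $\dot X_t = v_\theta(X_t,t)$ then yields $\Var[x^{\mathrm{gen}}] \le C(L,T)\,\mathbb{E}\|\delta\theta\|^2$, with $L$ the vector-field Lipschitz constant, giving a direction-dependent link from gradient variance to endpoint variance.

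Third, I would plug this into the SBM iteration as analyzed in \citet{shi2024diffusion, de2024schrodinger}. Each IPFP half-step admits a contraction of the schematic form
\begin{equation}
\mathcal{W}_2\bigl(\pi^{(k+1)}, \pi^\star\bigr) \;\le\; \rho\,\mathcal{W}_2\bigl(\pi^{(k)}, \pi^\star\bigr) \;+\; C_\to\sqrt{\Var[x_1^{\mathrm{gen}}]} \;+\; C_\leftarrow\sqrt{\Var[x_0^{\mathrm{gen}}]},
\end{equation}
where $\pi^\star$ is the entropic-OT fixed point and $(C_\to,C_\leftarrow)$ are the half-step constants. The asymmetry empirically observed would then provably dominate one half-step, enlarging the asymptotic residual around $\pi^\star$, and in the marginal case $\rho\to 1$ it would be enough to break contraction outright.

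The main obstacle is rigorously identifying the \emph{empirical sample variance at a fixed checkpoint} with a \emph{parameter-level variance} inside a non-convex neural basin: without either an SGD-noise model (to convert gradient variance into stationary parameter covariance) or a Bayesian posterior over near-optimal fields, the notion of ``the basin'' is itself under-specified. A secondary difficulty is that existing SBM contraction arguments symmetrize the two half-steps, so one would have to revisit the proof of \citet{shi2024diffusion} and track the forward and backward Girsanov / KL terms separately; only then do direction-dependent constants $C_\to \neq C_\leftarrow$ emerge, allowing the observed asymmetry in $\Var[x^{\mathrm{gen}}]$ to enter the bound in a non-trivial way.
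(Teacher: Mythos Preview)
This statement is not a theorem: it is an \emph{Observation} that explicitly disclaims any proof. The paper's own text reads ``precisely characterizing how this affects theoretical bounds is non-trivial and is left as a promising avenue for future research.'' There is no proof in the paper to compare against; the authors are flagging an open problem, not asserting a result.

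What you have written is therefore not a proof but a research programme for attacking that open problem. As such it cannot be ``correct'' or ``incorrect'' in the sense the exercise intends, and there is nothing in the paper to align with or diverge from. That said, a few remarks on the programme itself: the step linking empirical checkpoint variance to a parameter-level covariance via $H^{-1}\,\Var[\nabla_\theta L_{\mathrm{MC}}]\,H^{-1}$ presupposes both a well-defined local quadratic model and a stationary SGD-noise description, neither of which is established in this paper (and you correctly flag this as the main obstacle). The contraction-style inequality you write down, with separate constants $C_\to$ and $C_\leftarrow$, is plausible in spirit but is not a result that appears in \citet{shi2024diffusion} or \citet{de2024schrodinger} in that form; extracting direction-dependent error constants from those analyses would itself be the bulk of the work. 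In short, your outline is a reasonable sketch of how one might \emph{begin} the future research the Observation points to, but it should not be presented as a proof of the Observation, because the Observation makes no claim that admits one.
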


\begin{table}[h]
    \centering
    \small
    \caption{
    \small Comparison of FID, NLL, and sample variance for CFM$(\sigma = 0.05)$ (first iteration of SBM) and CFM (Forward and Backward). The backward vector field exhibits approximately 10 times lower variance in integrated samples compared to the forward vector field, despite both endpoints being standardized. This highlights an inherent asymmetry in sampling stability between forward and backward flows.}
    \begin{tabular}{lcccc}
        \toprule
        & \textbf{Direction} & \textbf{FID}$\downarrow$ & \textbf{Scaled -NLL}$\downarrow$ & \textbf{Variance $\pm$ Std. Error} \\
        \midrule
        \textbf{CFM$(\sigma = 0.05)$} & Backward & -- & 1.423564 & $0.01647 \pm 0.00024$ \\
        & Forward & 4.250 & -- & $0.36179 \pm 1.23 \times 10^{-5}$ \\
        \midrule
        \textbf{CFM} & Backward & -- & 1.426562 & $0.01414 \pm 0.00018$ \\
        & Forward & 4.199 & -- & $0.36177 \pm 1.23 \times 10^{-5}$ \\
        \bottomrule
    \end{tabular}
    \label{tab:fid_nll_var}
\end{table}


\begin{figure}[t]
    \centering
    \includegraphics[width=0.9\textwidth]{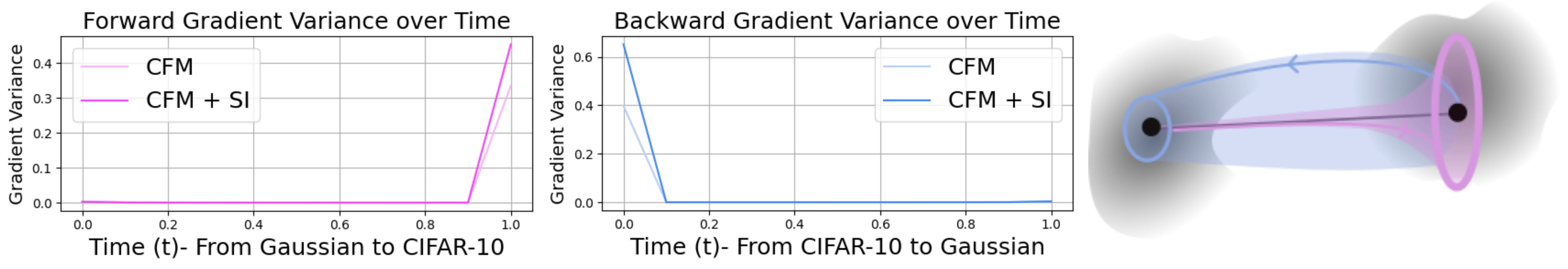}
\caption{\small
Comparison of gradient variance for the forward and backward passes in CFM and CFM$(\sigma = 0.05)$ (first iteration of SBM). The schematic on the right offers intuition for how gradient variance may influence sample variance during integration. Notably, the backward-pass gradient variance peaks near the CIFAR-10 endpoint, yet the resulting sample variance is lower compared to the forward pass. This is possible, for instance, if the forward field increases linearly while the backward field decreases linearly. These effects are quantitatively reflected in Table~\ref{tab:cfm_sbm_comparison}.
}

    \label{tuition}
\end{figure}

\section{Synthetic Experiments Details}
\label{ass:synth_dets}

\paragraph{Experiment Setting}
We evaluate generative models on synthetic datasets in dimensions 3 and 50. Each dataset is constructed by sampling from a Gaussian Mixture Model (GMM) with randomly initialized means and covariances, following our implementation in \texttt{generate\_datasets.py}. The source distribution is standard normal, and the target is the GMM. We compare Conditional Flow Matching (CFM) and CFM with Stochastic Interpolation (CFM$(\sigma = 0.05)$), both implemented as neural ODEs with time-conditioned MLP (Three-layer MLP, width 64, SELU activations) vector fields. Models are trained and evaluated on both in-sample (training) and out-of-sample (test) data. All metrics are computed as described below. We have used resources from \cite{feydy2019interpolating} to compute the distances.

\paragraph{Metric Descriptions}
\begin{itemize}
    \item \textbf{Log Probability (LogProb):} Measures the average log-likelihood of generated samples under the target GMM distribution. Lower values indicate a better fit the modes.
    \item \textbf{Maximum Mean Discrepancy (MMD):} A kernel-based statistical distance between two distributions, here computed using a Gaussian kernel. Lower values indicate better sample quality.
    \item \textbf{Sinkhorn Distance:} An entropy-regularized approximation of the Wasserstein (optimal transport) distance between empirical distributions, computed using the Sinkhorn algorithm. Lower values indicate closer distributions.
\end{itemize}

\paragraph{Note on Log-Likelihood Values}
To improve readability and avoid confusion, we report and plot the \textbf{positive} values of the log-likelihood (LogProb) throughout this paper, rather than the conventional negative log-likelihood (NLL). This allows for a more intuitive comparison, where lower values indicate better model performance.

It is important to note that log-likelihood (LogProb) primarily rewards models that generate samples close to the high-density regions (modes) of the target distribution, rather than accurately capturing the overall shape or support of the distribution. As a result, models that concentrate samples around the modes can achieve high log-likelihood scores even if they do not match the full distribution well. This should be kept in mind when interpreting LogProb values alongside other distance-based metrics.

\newcommand{\std}[1]{\tiny$\pm$#1}

\begin{table}[h]
\caption{Comparison of CFM and CFM$(\sigma = 0.05)$ across dimensions 3 and 50.}
\label{tab:cfm_sbm_comparison_app}
\begin{adjustbox}{max width=\textwidth}
\begin{tabular}{ll
                S[table-format=2.4]
                S[table-format=2.4]
                S[table-format=2.4]
                S[table-format=2.4]
                S[table-format=2.4]
                S[table-format=2.4]
                S[table-format=2.4]
                S[table-format=2.4]}
\toprule
\multicolumn{2}{l}{\textbf{Dimension}} & \multicolumn{4}{c}{\textbf{3}} & \multicolumn{4}{c}{\textbf{50}} \\
\cmidrule(lr){3-6} \cmidrule(lr){7-10}
& & \textbf{Gen} & \textbf{Mem} & \textbf{True} & \textbf{Data} & \textbf{Gen} & \textbf{Mem} & \textbf{True} & \textbf{Data} \\
\midrule
\multirow{6}{*}{CFM}
& LogProb   & 4.0150 & 4.0156 & 4.1330 & 4.0155 & 54.8299 & 53.6502 & 52.5094 & 53.6244 \\
&           & \std{0.0032} & \std{0.0032}  & \std{0.031} & \std{0.035} & \std{0.015} &  \std{0.26} & \std{0.0833} & \std{0.014} \\
& MMD       & 0.0034 & \num{1.758e-06} & 0.0014 & 0.0032 & 0.0021  & \num{9.089e-06}  & 0.0020  & 0.0019  \\
&           & \std{0.0005} & & \std{0.0001} & \std{0.0002} & \std{0.0001} &  & \std{2e-09} & \std{0.0001} \\
& Sinkhorn  & 0.0730 & \num{1.411e-05} & 0.0637 & 0.0790 & 15.1900 & 0.0045  & 14.3221 & 15.7400 \\
&           & \std{0.0054} &  & \std{0.002} & \std{0.004} & \std{0.162} &  & \std{0.110} & \std{0.130} \\
\midrule
\multirow{6}{*}{CFM$(\sigma = 0.05)$}
& LogProb   & 4.1270 & 4.0960 & 4.1330 & 4.0155 & 54.7220 & 53.8890 & 52.5094 & 53.6244 \\
&           & \std{0.0010} & \std{0.016} & \std{0.0008} & \std{0.0013} & \std{0.14} &  \std{0.16} & \std{0.11} & \std{0.26} \\
& MMD       & 0.0018 & \num{3.105e-05} & 0.0014 & 0.0032 & 0.0020  & \num{6.09e-05}  & 0.0020  & 0.0019  \\
&           & \std{0.00048} &  & \std{0.0003} & \std{0.0002} & \std{0.0001} &  & \std{3e-09} & \std{0.0001} \\
& Sinkhorn  & 0.0680 & \num{3.557e-04} & 0.0637 & 0.0790 & 15.1689 & 0.0304  & 14.3221 & 15.7400 \\
&           & \std{0.0015} &  & \std{0.007} & \std{0.004} & \std{0.170} &  & \std{0.220} & \std{0.230} \\
\bottomrule
\end{tabular}
\end{adjustbox}
\end{table}

\paragraph{How to read the table?}
This table summarizes several metrics for model evaluation. For readers unfamiliar with these results, here is how to interpret them: 

Consider the case for dimension 3. For the \textbf{Gen} (generated) column, we want the log-likelihood value to be closer to the \textbf{True} value rather than the \textbf{Data} value. If the generated value is closer to the data, it indicates memorization, rather than true generalization. For example, in CFM, the generated value is much closer to the data than the true value, suggesting memorization. This discrepancy arises because negative log-likelihood (NLL) tends to favor models that sample near the training data modes, rather than those that capture the full distribution.

Similarly, in the \textbf{Mem} (memorization) column, a value close to the data again indicates overfitting to the training points. 

For the \textbf{MMD} and \textbf{Sinkhorn} metrics, these measure distances between newly generated trajectories (starting from training points) and the pairings from previous iterations. In both MMD and Sinkhorn, we observe that CFM$(\sigma = 0.05)$ memorizes roughly ten times less than standard CFM, indicating better generalization.
\newpage
\subsection{Asymmetry in Gaussian mixtures}
\label{app:gaussian_mixture}
\paragraph{Experiment Settings}

We systematically investigated the Conditional Flow Matching (CFM) and CFM$(\sigma = 0.05)$ approaches for learning mappings between mixtures of Gaussian distributions (GMM-to-GMM). In each experiment, both the source and target distributions were Gaussian mixtures, with the number of components for each varied across the grid: $\{1, 2, 4, 8, 16, 32, 64\}$. The neural network architecture for the vector field was a multilayer perceptron (MLP) with configurable width ($w$, e.g., 64 by default) and input dimension ($d$, e.g., 10). Each cell in the results grid corresponds to a specific pair of source and target GMM component counts, allowing us to analyze the effect of distribution complexity on learning dynamics and gradient variance. Training was performed for up to 50{,}000 epochs using a batch size of 128 and a learning rate of $10^{-3}$, with integration performed via Neural ODEs. 
\begin{figure}[h]
    \centering
    \includegraphics[width = \linewidth]{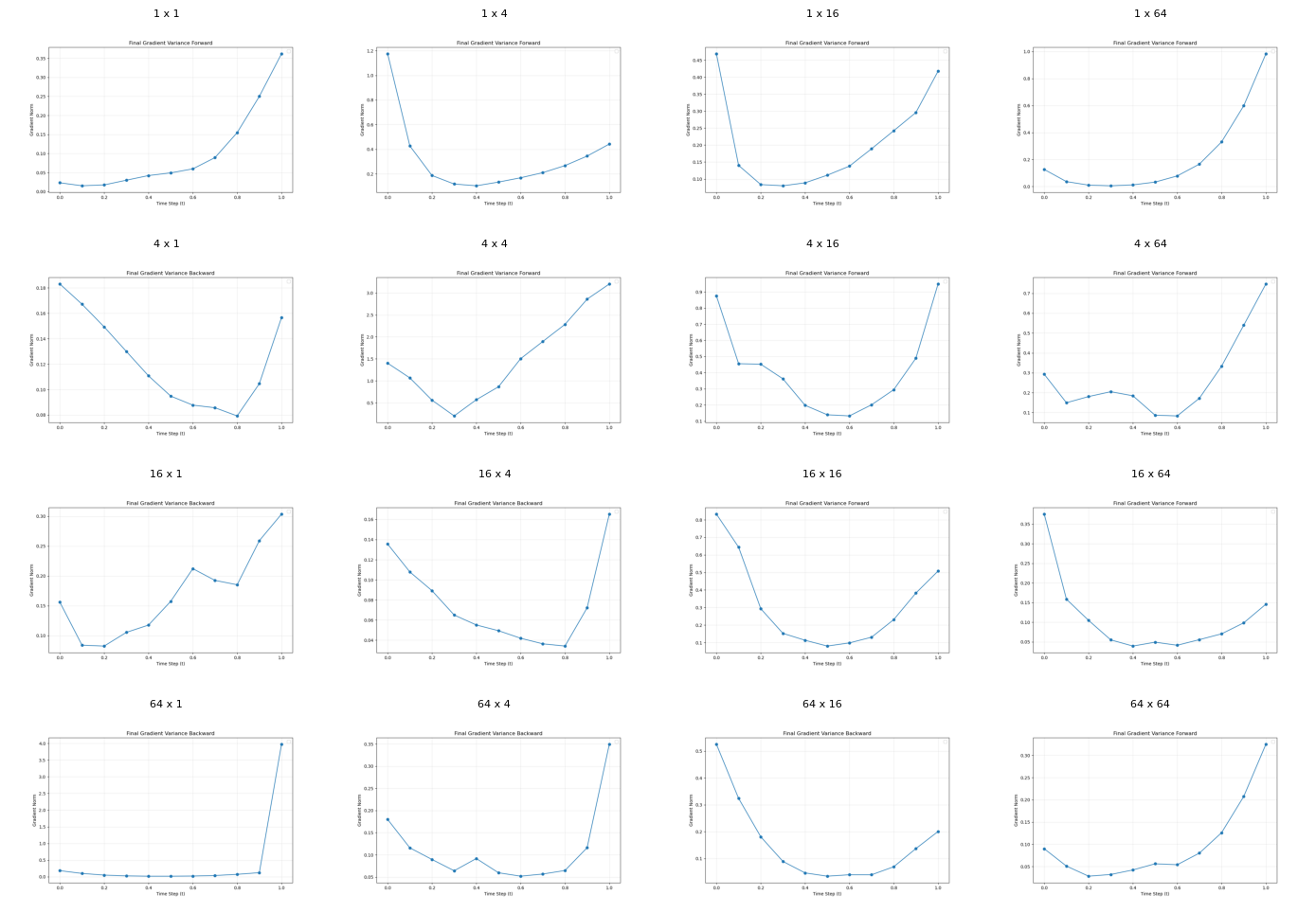}
    \caption{CFM in 10 dimensions. Each title indicates the number of Gaussian components in the source distribution multiplied by the number in the target distribution (e.g., 4×16 means 4 modes at source, 16 at target).}
    \label{fig:cfm_gmm_10d}
\end{figure}

\newpage
\begin{figure}[h]
    \centering
    \includegraphics[width = \linewidth]{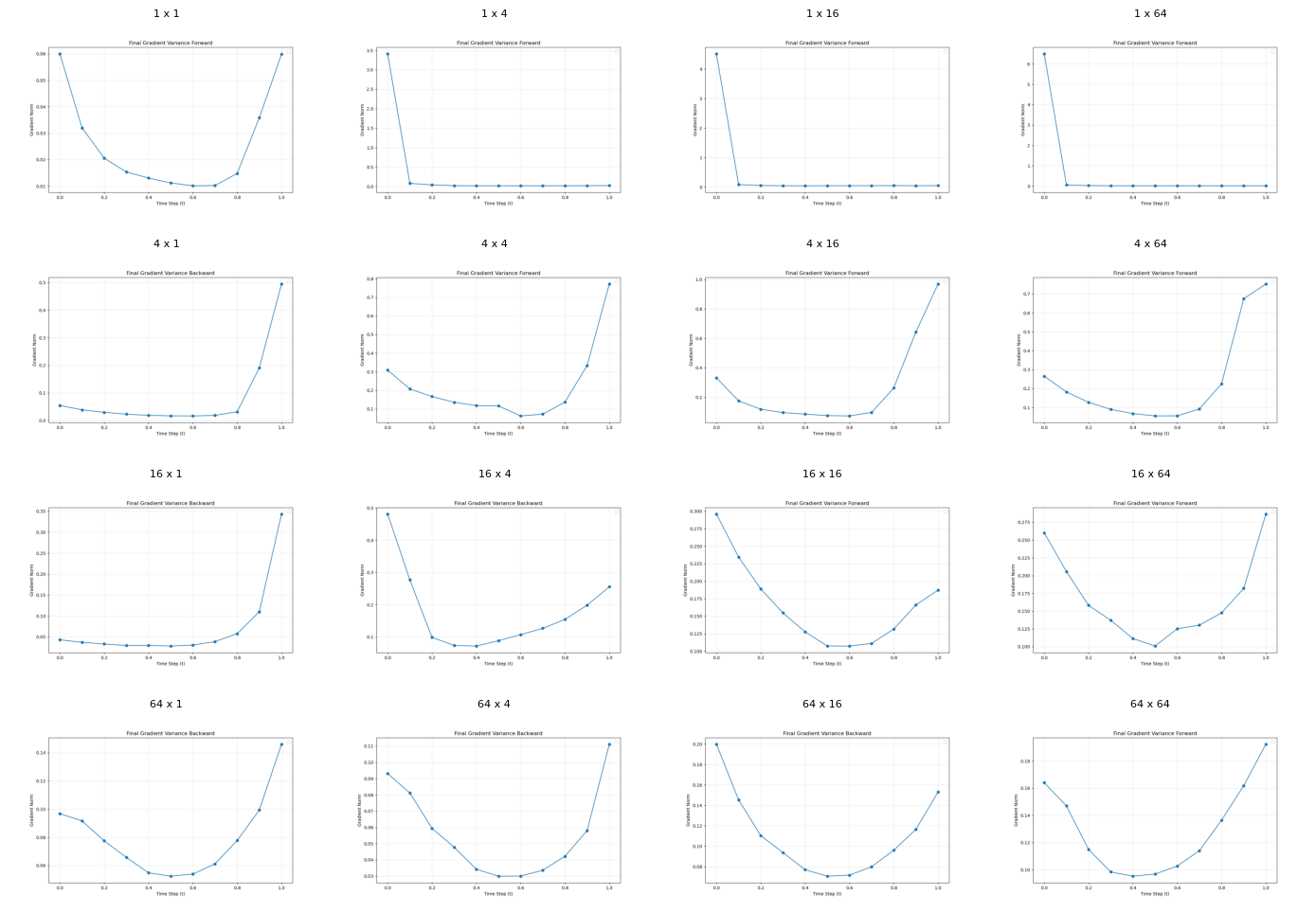}
    \caption{CFM combined with stochastic interpolants for mixture-to-mixture transport in 10 dimensions. Each panel is labeled as source modes × target modes.}
    \label{fig:cfm_sb_gmm_10d}
\end{figure}

We can notice that the variance of the gradients looks more stable in the case of the Figure \ref{fig:cfm_sb_gmm_10d} (stochastic interpolants), than in the case of Figure \ref{fig:cfm_gmm_10d} (noisless). 

\section{CIFAR-10 Experiment Details}
\label{app:cif_dets}

\paragraph{Model architecture.}
All experiments used a U-Net-based neural network (\texttt{UNetModelWrapper}) with the following configuration: input shape $(3, 32, 32)$, base channels $128$, $2$ residual blocks per level, channel multipliers $[1, 2, 2, 2]$, attention at $16\times16$ resolution ($4$ heads, $64$ head channels), and dropout rate $0.1$. The model is wrapped in a Neural ODE solver (Euler method).

\paragraph{Training.}
Models were trained on the CIFAR-10 training set, using random horizontal flips and normalization to $[-1, 1]$. Optimization used Adam with learning rate $2\times10^{-4}$, batch size $128$, gradient clipping at $1.0$, and a linear warmup over the first $5{,}000$ steps. Each run used $400{,}001$ steps (unless otherwise noted), with exponential moving average (EMA) of model weights ($0.9999$ decay). Checkpoints were saved every $20{,}000$ steps. All experiments used $4$ data loader workers and CUDA if available.

\paragraph{Flow objectives.}
We used Conditional Flow Matching (CFM, \texttt{--model cfm}), Schrödinger Bridge Matching (SBM, \texttt{--model sbm}), and other variants. 

\paragraph{Bidirectional setup.}
Both forward (Gaussian $\rightarrow$ CIFAR-10) and backward (CIFAR-10 $\rightarrow$ Gaussian) models were trained independently with identical hyperparameters. For the forward model, the source is standard Gaussian noise and the target is real images; for the backward model, the roles are swapped. 

\paragraph{Hardware and runtime.}
All CIFAR-10 experiments were conducted on a compute cluster equipped with NVIDIA A10 GPUs (24~GB VRAM, CUDA 12.2). Each training run was allocated a single A10 GPU and typically ran for 24 hours to reach 240{,}000 optimization steps. These resources enabled efficient training of both forward and backward models at the scale reported in the main text.

\subsection{FID and Interpolant Details}
\label{app:network_details}

For Figure~\ref{tuition}, we trained four vector fields (two forward and two backward), each using a different interpolant:

\begin{itemize}
    \item \textbf{CFM:} The interpolant is deterministic,
    \[
        x_t = (1-t)x_0 + t x_1.
    \]
    \item \textbf{CFM$(\sigma = 0.05)$:} The interpolant includes stochastic noise,
    \[
        x_t = (1-t)x_0 + t x_1 + \sigma \sqrt{t(1-t)} Z,
    \]
    where $Z \sim \mathcal{N}(0, I)$ and $\sigma = 0.01$.
\end{itemize}

The models were evaluated as follows:
\begin{itemize}
    \item CFM forward FID: \textbf{4.199}
    \item CFM backward scaled NLL: \textbf{1.426}
    \item CFM$(\sigma = 0.05)$ forward FID: \textbf{4.250}
    \item CFM$(\sigma = 0.05)$ backward scaled NLL: \textbf{1.423}
\end{itemize}

\subsection{Does adding powers of $t$ help models learn better?}
\label{app:powers_t}
\begin{figure}[h]
    \centering
    \includegraphics[width=0.5\linewidth]{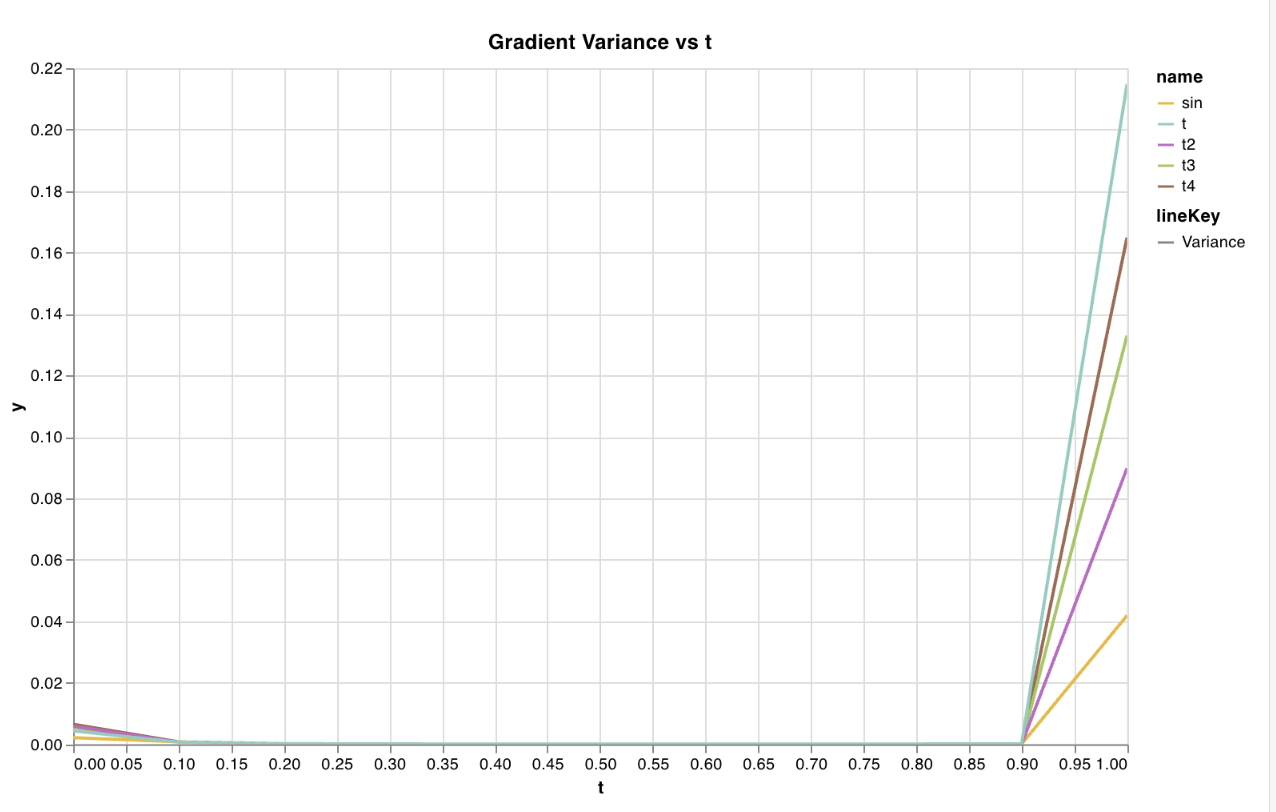}
    \includegraphics[width=0.45\linewidth]{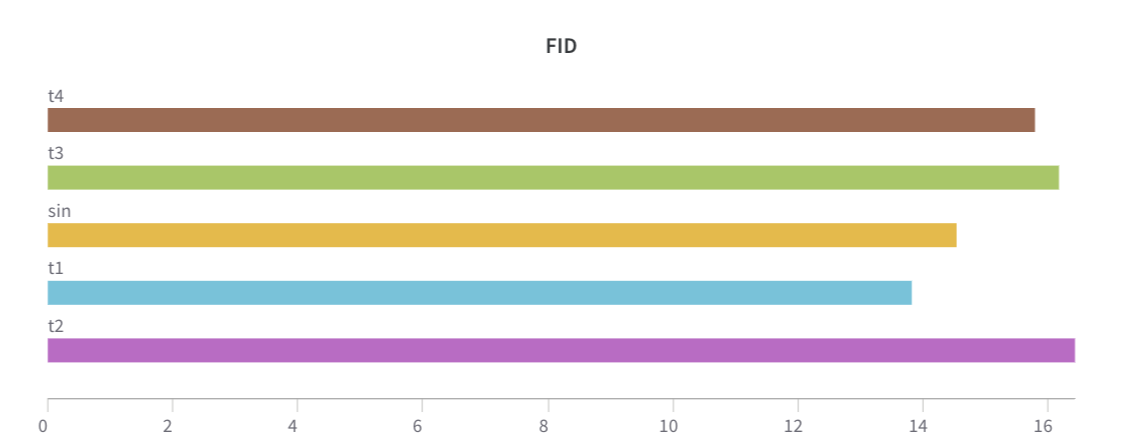}
    \caption{Variance not necessarily correlated with performance. I was wondering if it is worth saying that sinusoidal time embedding is better at representing function s like $q(x)=\frac{1}{1+x}$, than polynomials. (Taylor vs Fourier)}
    \label{fig:time_embedding}
\end{figure}

\section{CelebA Experiment Details}
\label{app:fid_celeba}
In Table \ref{tab:fid_50k}, we computed FID on both the training and validation sets and observed similar values, which could, in principle, indicate the trade-off between memorization and generalization. However, the CelebaOt dataset~\cite{NEURIPS2021_7a6a6127} is itself generated by another model, and with only 50k generated samples it likely contains overlapping images. Consequently, the training and validation splits may share duplicates, making it difficult to draw strong conclusions from the FID scores. We include these results merely as evidence that all models were trained to a satisfactory degree.
\begin{table}[h]
\centering
\setlength{\tabcolsep}{8pt} 
\renewcommand{\arraystretch}{1.2} 
\begin{tabular}{lcc}
\toprule
Setting   & Method & FID ↓ \\ 
\midrule
\multirow{2}{*}{Shuffled} 
 & CFM ($\sigma=0$)   & 34.36 \\
 & CFM ($\sigma=0.05$) & 32.31 \\
\midrule
\multirow{2}{*}{ReFlow} 
 & CFM ($\sigma=0$)   & 31.42 \\
 & CFM ($\sigma=0.05$) & 42.38 \\
\bottomrule
\end{tabular}
\caption{FID scores for the 50k dataset-side setting. Lower is better.}
\label{tab:fid_50k}
\end{table}

\end{document}